\documentclass{article}
\PassOptionsToPackage{numbers,compress,sort}{natbib}
\PassOptionsToPackage{pagebackref}{hyperref}



\usepackage[preprint]{neurips_2022}



\usepackage[utf8]{inputenc} 
\usepackage[T1]{fontenc}    
\usepackage{hyperref}       
\usepackage{url}            
\usepackage{booktabs}       
\usepackage{amsfonts}       
\usepackage{nicefrac}       
\usepackage{microtype}      
\usepackage{xcolor}         
\usepackage{amsmath}
\usepackage{amssymb}
\usepackage{mathtools}
\usepackage{amsthm}
\usepackage{caption}
\usepackage{graphicx}
\usepackage{subcaption}
\usepackage{wrapfig}
\newcommand\norm[1]{\lVert#1\rVert}
\theoremstyle{plain}
\newtheorem{theorem}{Theorem}[section]

\newtheorem{lemma}[theorem]{Lemma}
\newtheorem{corollary}[theorem]{Corollary}
\theoremstyle{definition}

\theoremstyle{remark}
\newtheorem{remark}[theorem]{Remark}

\title{A Flexible Diffusion Model}

%

\author{%
  Weitao Du \\
  Chinese Academy of Sciences\\
  \texttt{duweitao@amss.ac.cn}
\\
\And
  Tao Yang \\
  Xi’an Jiaotong University\\
  \texttt{ yt14212@stu.xjtu.edu.cn}
\\
\And
  He Zhang \\
  Xi’an Jiaotong University\\
  \texttt{ mao736488798@stu.xjtu.edu.cn}
\And
  Yuanqi Du \\
  Cornell University\\
  \texttt{yd392@cornell.edu} \\
}

\begin{document}

\maketitle
\begin{abstract}
Diffusion (score-based) generative models have been widely used for modeling various types of complex data, including images, audios, and point clouds. Recently, the deep connection between forward-backward stochastic differential equations (SDEs) and diffusion-based models has been revealed, and several new variants of SDEs are proposed (e.g., sub-VP, critically-damped Langevin) along this line. Despite the empirical success of the hand-crafted fixed forward SDEs, a great quantity of proper forward SDEs remain unexplored. In this work, we propose a general framework for parameterizing the diffusion model, especially the spatial part of the forward SDE. An abstract formalism is introduced with theoretical guarantees, and its connection with previous diffusion models is leveraged. We demonstrate the theoretical advantage of our method from an optimization perspective.
Numerical experiments on synthetic datasets, MINIST and CIFAR10 are also presented to validate the effectiveness of our framework.
\end{abstract}
\section{Introduction}
Diffusion (score-based) models, which are originated from non-equilibrium statistical physics, have recently shown impressive  successes on sample generations of a wide range of types, including images \citep{ho2020denoising,nichol2021improved,song2020score,dhariwal2021diffusion} , 3D point clouds \citep{luo2021diffusion,2110.14811} and audio generation \citep{kong2020diffwave,liu2021diffsvc}, among others. In addition to concrete applications of various diffusion generative models, it is also desirable to analyze them in an appropriate and flexible framework, by which novel improvements can be further developed.    

Currently, one of the promising formal frameworks for unifying different types of diffusion models is the stochastic differential equations (SDE), as proposed in \citep{song2020score}. Under this formalism, the generative (denoising) process can be viewed as reversing the forward (noising) process from real data manifold to white noise. Furthermore, with the help of the Feynman-Kac formula and Girsanov transform \citep{da2014introduction}, the score-matching training has been proved to be equivalent to certain log-likelihood training in the infinite-dimensional path space \citep{huang2021variational}. 

Although the ELBO cost function for diffusion models derived in \citep{huang2021variational} explicitly contains both the forward and backward ingredients, the forward (noising) process is hand-crafted and set to be fixed throughout training, which is clearly a mismatch when comparing diffusion models with other log-likelihood based models (e.g., VAE \citep{oussidi2018deep}) in practice. Moreover, since the reverse generating process is uniquely determined by the forward process, the total flexibility of the model lies in parameterizing the forward process. This would to another cascading effect of freezing the forward process, especially when knowing that different noising schedules do affect the empirical performances (e.g., the different forward processes including VE, VP, sub-VP \citep{song2020score} and damped Langevin diffusion \citep{dockhorn2022score} displayed distinct generating performances in terms of perceptual quality).   
On the other hand, even reparameterizing and training the noising-schedule of the forward process would improve the diffusion model, as it was shown in \citep{kingma2021variational}. 

To answer these concerns, it is crucial to incorporate flexible parameterized forward processes under the general SDE framework of \citep{song2020score}. Though the idea of training the forward process is intuitively reasonable, the implementation is far from straightforward. The first challenge is to find the appropriate search sub-space within the grand function space consisting of the whole stochastic processes. A hard constraint is that the stationary distribution of the candidate stochastic processes is centered Gaussian, which will be set as the prior distribution of the generative process to sample from.  
In fact, \cite{kingma2021variational} has demonstrated the power of optimizing the one-dimensional time component (reparameterized by the signal to noise ratio)  of the forward process. However, efficiently parameterizing the space components remains to be explored, especially taking into account the complex structure of the data distribution \citep{narayanan2010sample}.     

This paper concentrates on the theoretical and the practical aspects of solving the flexibility challenges of the diffusion model, emphasizing the spatial components of the forward process. First of all, inspired by concepts from Riemannian geometry and  Hamilton Monte-Carlo methods, we define a unified and flexible class of diffusion processes (FP-Diffusion) that rigorously satisfies the fixed Gaussian stationary distribution condition with theoretical guarantee. 
To highlight the advantages of flexible diffusion models, we discuss the theoretical properties and effects of parameterized forward processes from the perspective of optimization.
Furthermore, by introducing the flexible diffusion model, all sorts of regularizers \citep{finlay2020train,onken2021ot} on selecting more flat paths for continuous normalizing flows can be referenced and implemented.  

We summarize our major contributions here:
\begin{itemize}
    \item We introduce the symplectic structure and the anisotropic Riemannian structure to the diffusion model, through which a framework for parameterizing the forward process with theoretical guarantees is formulated. Completeness and convergence properties as $t \rightarrow \infty$ are proved along the same route.  
    \item To theoretically motivate optimizing the forward process, we analyze the implications of parameterizing the forward (noising) process from the optimization function point of view and demonstrate how our method unifies previous diffusion models. Since our extension is compatible with merging regularization terms into the training loss, We provide experimental results in simulated scenarios and demonstrate how it behaves under regularization. 
    \item Except considering the continuous diffusion framework, we also develop a corresponding light version of our parameterization with an explicit formula of general terms for efficient Monte-Carlo training. It enables us to perform comparative studies on large-scale datasets, e.g., MNIST and CIFAR10.
\end{itemize}

\begin{figure}[t]
\centering
\includegraphics[width=0.8\textwidth]{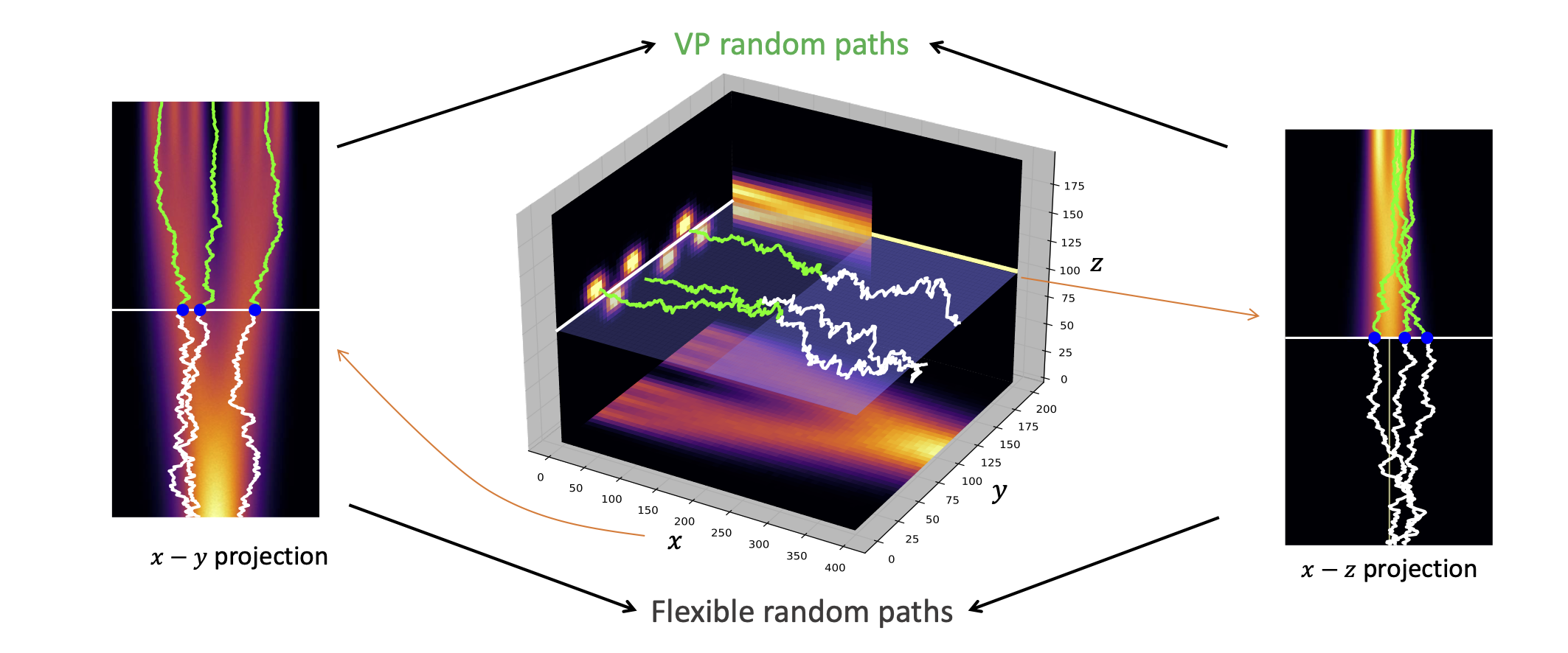}
\caption{Evolution trajectories of fixed and flexible forward SDEs}
\label{fig:motivation}
\vspace{-0.3cm}
\end{figure}


\section{Preliminaries and Related Works}
Given a data distribution $p(x)$, we associate it with a Gaussian diffusion process (forward) that increasingly adds noise to the data, then the high-level idea of diffusion generative models is to model the real data distribution as a multi-step denoising (backward) process. In a discrete step setting, the forward process can be formulated as an N-steps Markov chain from real data $x$ to each noised $x_t$:
$$p(x_t| x_{t-1}) = \mathcal{N}(\alpha_t x, \beta_t I),\ \ t \in \{1,\dots ,N\}. $$
For DDPM model \citep{ho2020denoising}, $\alpha_t$ is set to be $\alpha_t : = \sqrt{1 - \beta_t}$. If we take the continuous limit of $\beta_t$ (when $\sqrt{1 - \beta_t} \approx 1 - \frac{1}{2}\beta_t $), 
we find that $X_t$ satisfies a time-change of the following Ornstein–Uhlenbeck stochastic differential equation (SDE):
\begin{equation} \label{ou}
dX_t = -\frac{1}{2}\beta(t) X_tdt + \sqrt{ \beta(t)}dW_t,
\end{equation}
the so-called \emph{variance-preserving} diffusion process (VP) in \citep{song2020score}. Therefore, DDPM can be taken as a discretization of the Ornstein–Uhlenbeck process. Following this line,  \citep{song2020score} proposed to characterize different types of diffusion models by formulating the underlying SDE of the model:
\begin{equation} \label{forward}
dX_t = f(X_t,t)dt + g(t)dW_t, \ \ 0 \leq t \leq T    
\end{equation}   
where $\{W_t\}_{t=0}^{\infty}$ denotes the standard Brownian motion, whose dimension is set to be the same as the data. Usually we choose a different time schedule (time-change) from $t$. Let $\beta(t)$ be a continuous function of time $t$ such that $\beta(t) > \beta(s) > 0$ for $0 < s < t$, then $\beta(t)$ is called a time schedule (time-change) of $t$. It can be further shown that when $t \rightarrow \infty$, the stationary distribution of Eq. \ref{ou} is the standard multivariate Gaussian: $\mathcal{N}(0,I)$ \citep{2002Stochastic}. On the other hand, SMLD diffusion models \citep{song2019generative} can be seen as a discretization of the \emph{variance-exploring} (VE) process ((9) of \cite{song2020score}) $\{X_t\}_{t=0}^{t = T}$, which satisfies the following SDE:
\begin{equation} \label{VE}
dX_t = \sqrt{2\sigma(t)\sigma'(t)}dW_t.    
\end{equation}

Note that although a random process doesn't have a pointwise inverse, it's still valid to define a reverse SDE $Y_t$ of a forward SDE $X_t$ such that the marginal distributions at each time and its corresponding `reverse' time match: $p_t(X_t) \equiv q_{T-t}(Y_{T-t})$. Then $Y_t$ is exactly the denoising (backward) stochastic process we are looking for. In other words, real data can be generated by sampling from the Gaussian distribution and tracking the denoising process from time $T$ to $0$. Surprisingly, the solution of the reverse-time SDE with respect to the forward $X_t$ is derived analytically in \citep{anderson1982reverse,song2020score}:
\begin{equation} \label{rev}
dY_t = [f(Y_t,t) - g^2(Y_t,t)\nabla \log p_t(Y_t)]dt + g(t)dW_t,   \end{equation}
where $W_t$ is a Brownian motion running backwards in time from $T$ to 0. The unknown score function $s_t(x) : = \nabla \log p_t(x)$ depends on the data distribution $p_0$ and the forward process $X_t$. The continuous diffusion model approaches the score function by various types of (weighted) score-matching procedures, we will briefly review the loss function in section \ref{main:param_forward}. 

Now we summarize more related works: 

\textbf{Diffusion Probabilistic Model}
(DPM) as a generative model~\citep{VAE, GAN, yang2021towards, DisCo} was first introduced in ~\cite{sohl2015deep}, as a probabilistic model inspired by non-equilibrium thermodynamics. The high-level idea is to treat the data distribution as the Gibbs (Boltzmann) equilibrium distribution \cite{friedli_velenik_2017}, then the generating process corresponds to transitioning from non-equilibrium to equilibrium states \citep{de2013non}. DDPM~\cite{ho2020denoising} and ~\cite{nichol2021improved,song2020denoising,  Watson2022LearningFS,Jolicoeu,bao2022analytic} further improve DPMs by introducing the Gaussian Markov chain and various inference and sampling methods, through which the generative model is equivalent to a denoising diffusion model. 
~\citep{vahdat2021score} then introduces the diffusion to a latent space and the denoising steps are also further increased with improved empirical performance. Furthermore, as we will show in this article, there are infinite processes (thermodynamical systems) that can connect non-equilibrium states to equilibrium. 

\textbf{Score Matching}: Score-based energy model \citep{DBLP:journals/jmlr/Hyvarinen05,Vincent2011ACB} is based on minimizing the difference between the derivatives of the data and the model’s log-density
functions, which avoids fitting the normalization constant of the intractable distribution.  ~\citep{song2019generative,song2020sliced} introduce sliced score matching that enabled scalable generative training by leveraging different levels of Gaussian noise and several empirical tricks. ~\cite{song2020score,song2021maximum} further studied perturbing the data with a continuous-time stochastic process.
Under this framework, \citep{kingma2021variational} proposes to reparameterize the time variable of the forward process by the signal-to-noise ratio (SNR) variable and train the noising-schedule. Indeed, the sub-VP SDE of \citep{song2020score} can also be seen as modifying the time scale of the diffusion part of the original VP. From this point of view, our model can be seen as a novel spatial parameterization of the forward process, which takes into account the spatial inhomogeneity of the data.

\section{Methods}
\subsection{A general framework for parameterizing diffusion models}
\label{para}


From the preliminary section, we realize that the stationary distribution of the forward process corresponds to the initial distribution of the denoising (generative) process. Therefore, it must be a simple distribution we know how to sample from, mainly set to be standard Gaussian. 
In this article, we parameterize the spatial components of the forward process by considering the following SDE:
\begin{equation} \label{ge}
dX_t = f(X_t)dt + \sqrt{2R(X_t)}dW_t,  
\end{equation}
under the \textbf{hard  constraint} that the stationary distribution of $X_t$ is standard Gaussian (the scaled Gaussian case can be easily generalized). Introducing the time change $\beta(t)$, then by Ito's formula, $X_{\beta(t)}$ satisfies a variant of Eq. \ref{ge}:
\begin{equation} \label{ge variant}
dX_{\beta(t)} = f(X_t)\beta'(t)dt + \sqrt{2\beta'(t) R(X_t)}dW_t.  
\end{equation}

To satisfy the theoretical constraint, it's obvious that the function class of $f(x)$ and $R(x)$ should be properly restricted. Inspired by ideas from the  Riemannian Manifold Hamiltonian Monte-Carlo algorithm \citep{girolami2011riemann,betancourt2013general,betancourt2017conceptual,NIPS2014_beed1360} and anisotropic diffusion technique of image processing, graph deep learning \citep{weickert1998anisotropic,Perona1990,Alvarez,10.1145/3507905}, we propose a flexible framework for parameterizing the forward process by introducing two geometric quantities: the Riemannian metric and the symplectic form in $\mathbf{R}^n$.

Intuitively, an anisotropic Riemannian metric implies that the space was curved and in-homogeneous, and the corresponding Brownian motion will inject non-uniform noise along with different directions. On the other hand, the symplectic form is crucial for defining the dynamics of a given Hamiltonian. Both of them set the stage for performing diffusion on the data manifold, from the data distribution to the standard multivariate normal distribution, whose density under the canonical volume form $d x_1 \dots d x_n$ is
\begin{equation} \label{gau}
\frac{1}{\sqrt{(2\pi)^n}}\exp(-\frac{1}{2}\norm{x}^2)d x_1 \dots d x_n.
\end{equation}
Moreover, 
in Appendix A,
we will prove that our parameterization is \textbf{complete} under certain constraints.

Now we introduce the two concepts in detail. In a coordinate system, a Riemannian metric can be identified as a symmetric positive-definite matrix ( the Euclidean metric is exactly the identity matrix). Given a Riemannian metric $R(x) : = \{R_{ij}(x)\}_{1 \leq i,j \leq n}$, recall that for a smooth function $H(x)$, the Riemannian Langevin process satisfies the following SDE:
\begin{equation}
dX_t = - \Tilde{\nabla H}(X_t) dt + \sqrt{2}dB_t,    
\end{equation}
where $\Tilde{\nabla} H(x) : = R^{-1}(x) \nabla H(x)$ is the gradient vector field of $H$, and $B_t$ denotes the \textbf{Riemannian Brownian motion} \citep{2002Stochastic}. In local coordinates, we have (see (13) of \citep{girolami2011riemann}):
\begin{align*}
dB^i_t = |R(X_t)|^{-1/2} \sum_{j=1}^n \frac{\partial}{\partial x_j}(R^{-1}_{ij}(X_t) |R(X_t)|^{1/2})dt + \sqrt{R^{-1}(X_t)}dW_t^i,\end{align*} for $i \in \{1,2,\dots,n\}$.
One crucial property of the Riemannian Langevin process \citep{2014Analysis} is that the stationary distribution $p(x)$ has the following form:
$$p(x) \propto e^{-H(x)} dV(x),$$
where $dV(x) := \sqrt{|R(x)|}d x_1 \dots d x_n$ is the Riemannian volume form. Transforming back to the canonical volume form and take $H(x) = \frac{1}{4}\norm{x}^2 \cdot \log (|R(x)|)$, we have proved the following lemma:
\begin{lemma}
The stationary distribution of the SDE (Eq. \ref{metric})  below is the standard Gaussian of $\mathbf{R}^n$:
\begin{align}  \label{metric} 
dX_t = \frac{1}{2}[-\sum_j R^{-1}_{ij}(X_t)\cdot(X_t)_j
+ \sum_j \frac{\partial}{\partial x_j} R^{-1}_{ij}(X_t)] dt + \sqrt{R^{-1}(X_t)}dW_t .  
\end{align}
\end{lemma}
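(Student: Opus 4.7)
The plan is to verify that the standard Gaussian satisfies the stationary Fokker--Planck equation of Eq.~\ref{metric} directly, which bypasses any bookkeeping between Riemannian and Euclidean volume forms.

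I would first read off the drift $b^i(x)=\tfrac{1}{2}\bigl(-\sum_j R^{-1}_{ij}(x)\,x_j+\sum_j\partial_j R^{-1}_{ij}(x)\bigr)$ and the diffusion matrix $a^{ij}(x)=R^{-1}_{ij}(x)$ from Eq.~\ref{metric}. A smooth density $p(x)$ on $\mathbf{R}^n$ is stationary (with respect to Lebesgue measure) iff
\begin{equation*}
\sum_i\partial_i(b^i p)=\tfrac{1}{2}\sum_{i,j}\partial_i\partial_j(a^{ij}p).
\end{equation*}
Substituting $p(x)\propto\exp(-\tfrac{1}{2}\norm{x}^2)$ and using $\partial_j p=-x_j p$, one application of Leibniz on the right-hand side gives
\begin{equation*}
\tfrac{1}{2}\sum_{i,j}\partial_i\partial_j(R^{-1}_{ij}p)=\tfrac{1}{2}\sum_i\partial_i\!\Bigl[\sum_j(\partial_j R^{-1}_{ij})\,p-\sum_j R^{-1}_{ij}\,x_j\,p\Bigr]=\sum_i\partial_i(b^i p),
\end{equation*}
which matches the left-hand side exactly. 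Symmetry of $R^{-1}$ is used so that the mixed second derivatives organize correctly, and positive definiteness enters only to make sense of $\sqrt{R^{-1}}$. This already delivers the lemma; uniqueness of the stationary law and convergence as $t\to\infty$ are deferred to Appendix~A.

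As a conceptual cross-check, and to explain where the specific drift comes from, I would re-derive the same SDE from the Riemannian Langevin framework quoted just before the lemma. With $H(x)=\tfrac{1}{2}\norm{x}^2+\tfrac{1}{2}\log|R(x)|$, the factor $\sqrt{|R|}$ in $dV=\sqrt{|R|}\,dx_1\dots dx_n$ cancels the $-\tfrac{1}{2}\log|R|$ contribution inside $e^{-H}$, so the Riemannian density $\tfrac{1}{Z}e^{-H}dV$ pushes forward to the standard Gaussian under the identity map to Euclidean $\mathbf{R}^n$. The main obstacle along this geometric route is reconciling the $\sqrt{2}$ in the Langevin convention with the $\sqrt{R^{-1}}$ (no $\sqrt{2}$) diffusion of Eq.~\ref{metric} and tracking how the $\tfrac{1}{2}\nabla\log|R|$ piece of $\nabla H$ combines with the Laplace--Beltrami correction appearing inside $dB_t$; this is precisely the bookkeeping that the direct Fokker--Planck verification sidesteps, which is why I would present that as the primary proof.
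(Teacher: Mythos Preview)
Your proposal is correct and takes essentially the same approach as the paper: the primary Fokker--Planck verification you give is exactly how the paper rigorously proves the (more general) Theorem~\ref{general} in Appendix~A, of which this lemma is the $\omega=0$ special case, and your geometric cross-check via the choice of $H$ mirrors the Riemannian Langevin derivation the paper sketches in the main text immediately before the lemma.
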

\begin{remark}
It's worth mentioning that the infinitesimal generator of the Riemannian motion is the Laplacian operator $\Delta$. When acting on a smooth function $f$,
$$\Delta f : = \frac{1}{\sqrt{|R(x)|}}\partial_i(\sqrt{|R(x)|}(R^{-1})_{ij}\partial_j f).$$
It has the same form as the anisotropic diffusion of (1.27) in \citep{weickert1998anisotropic}. The effectiveness of anisotropic noise is explored in section \ref{toy}. 
\end{remark}
On the other hand, introducing a symplectic form $\omega$ allows us to do Hamiltonian dynamics in an even-dimensional space $\mathbf{R}^n$, when $n = 2d$. Since a symplectic form is a non-degenerate closed 2-form, it automatically becomes zero in odd-dimensional spaces. In this article, we will restrict ourselves to a special type of symplectic form, which consists of constant anti-symmetric matrices $\{\omega_{ij}\}_{1 \leq i,j \leq 2d}$. Then the corresponding Hamiltonian dynamics of $H(x)$ is:
\begin{equation} \label{sym}
dX_t = \omega \nabla H(X_t) dt.
\end{equation}

We mainly focus on two remarkable properties of Hamiltonian dynamics: 1. It preserves the canonical volume form (the determinant of the Jacobi matrix equals one); 2. The Hamiltonian function $H(x)$ takes a constant value along the integral curves (see the remark in Appendix A). Using the change of variables formula, we conclude that the probabilistic density of $X_t$ preserves the equilibrium Gibbs distribution:
$$p(x) \propto e^{-H(x)} d x_1 \dots d x_n,$$
when $X_0$ is sampled from the Gibbs distribution. 

Let $H(x) = \frac{1}{2}x^2$, the potential energy of the Harmonic oscillator. Then by merging the Riemannian part (Eq. \ref{metric}) and the symplectic part (Eq. \ref{sym}) 
we obtain the following theorem:
\begin{theorem} \label{general}Suppose $\omega$ is a anti-symmetric matrix, and $R^{-1}(x)$ is a positive-definite symmetric matrix-valued function of $x \in \mathbf{R}^n$.
Then the (unique) stationary distribution of (Eq. \ref{sum})  below is the standard Gaussian (Eq. \ref{gau}) of $\mathbf{R}^n$:
\begin{equation} \label{sum}
 dX_t = \frac{1}{2}[-\sum_j R^{-1}_{ij}(X_t)\cdot(X_t)_j - 2\sum_j  \omega_{ij}\cdot(X_t)_j + \sum_j \frac{\partial}{\partial x_j} R^{-1}_{ij}(X_t)] dt  + \sqrt{R^{-1}(X_t)}dW_t   
\end{equation}
\end{theorem}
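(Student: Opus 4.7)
The plan is to verify the stationarity through the Fokker--Planck (forward Kolmogorov) equation and then obtain uniqueness from a separate ellipticity/ergodicity argument, so existence and uniqueness are handled independently. Concretely, let $\mathcal{L}$ denote the infinitesimal generator of the SDE (\ref{sum}) and $\mathcal{L}^*$ its formal $L^2$ adjoint; the candidate invariant density is $p^*(x)=(2\pi)^{-n/2}\exp(-\tfrac12\|x\|^2)$, and I will show $\mathcal{L}^* p^* \equiv 0$ by splitting the drift into a Riemannian--Langevin piece and a symplectic piece and handling them separately.

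First I would split the drift in (\ref{sum}) as $b = b_R + b_\omega$, where
\begin{equation*}
b_R^i(x) \;=\; \tfrac12\Bigl[-\sum_j R^{-1}_{ij}(x)\,x_j + \sum_j \partial_{x_j} R^{-1}_{ij}(x)\Bigr], \qquad b_\omega^i(x) \;=\; -\sum_j \omega_{ij}\,x_j ,
\end{equation*}
while the diffusion tensor is $R^{-1}(x)$ throughout. The Lemma already establishes that the Riemannian--Langevin SDE with drift $b_R$ and diffusion $R^{-1}$ has $p^*$ as its stationary density, i.e.\
\begin{equation*}
-\nabla\!\cdot\!(b_R p^*) + \tfrac12 \sum_{i,j}\partial_{x_i}\partial_{x_j}\bigl(R^{-1}_{ij}\,p^*\bigr) \;=\; 0 .
\end{equation*}
So $\mathcal{L}^*p^* = 0$ reduces to showing that the symplectic contribution is divergence-free against $p^*$, namely $\nabla\!\cdot\!(b_\omega\,p^*) = 0$.

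This last identity is a one-line computation that uses exactly the two structural properties of $\omega$: anti-symmetry makes the trace vanish, so $\nabla\!\cdot(\omega x) = \operatorname{tr}(\omega)=0$, and it also makes the quadratic form vanish, so $(\omega x)\!\cdot\!\nabla p^* = -(x^{\top}\omega x)\,p^* = 0$. Combining,
\begin{equation*}
\nabla\!\cdot(b_\omega p^*) \;=\; (\nabla\!\cdot b_\omega)\,p^* + b_\omega\!\cdot\!\nabla p^* \;=\; 0 ,
\end{equation*}
and the two pieces together give $\mathcal{L}^*p^*=0$. Intuitively this is just the statement that the Hamiltonian flow for $H=\tfrac12\|x\|^2$ preserves the Gibbs measure $e^{-H}$, which was recorded above Eq.~(\ref{sum}); the Fokker--Planck calculation above is the infinitesimal version of that conservation law and is the clean way to glue the symplectic and Riemannian pieces.

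The step I expect to be delicate is \emph{uniqueness} of the invariant distribution, since the symplectic drift is only skew-symmetric and does not help the dissipation by itself. For uniqueness I would invoke the positive-definiteness of $R^{-1}(x)$, which makes the diffusion uniformly elliptic (at least locally), so the transition semigroup is strong Feller and irreducible; combined with non-explosion (which will follow from the completeness argument promised in Appendix A, controlling the linear growth of $b_R$ and $b_\omega$ and the boundedness assumptions on $R^{-1}$), Hasminskii's criterion or a standard Lyapunov argument using $V(x)=\tfrac12\|x\|^2$ yields positive recurrence and hence a unique invariant probability measure, which must then coincide with the standard Gaussian found above. The main technical obstacle is therefore not the identity $\mathcal{L}^*p^*=0$ but specifying the regularity/growth hypotheses on $R^{-1}$ that make the completeness/ergodicity machinery go through, which is precisely what the appendix is set up to supply.
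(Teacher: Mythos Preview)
Your proposal is correct and follows essentially the same route as the paper: both verify stationarity via the Fokker--Planck equation by splitting the drift into the Riemannian--Langevin piece and the symplectic piece, dispatch the symplectic contribution using the anti-symmetry of $\omega$ (so that $\sum_{i,j}\partial_{x_i}(\omega_{ij}x_j\,e^{-\|x\|^2/2})=0$), and obtain uniqueness from ellipticity/the Feller property of the positive-definite diffusion. The only cosmetic difference is that you invoke Lemma~3.1 for the Riemannian piece whereas the paper redoes that computation explicitly (in the slightly more general $m$-scaled case), and you spell out the strong Feller/irreducibility/Lyapunov reasoning for uniqueness where the paper simply cites \cite{2014Analysis}.
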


We name Eq. \ref{sum} as our \textbf{FP-Diffusion} model. Note that theorem \ref{general} can also be verified and extended to scaled Gaussian distribution by direct computation, and we leave the proof in Appendix A.
For a graphical presentation, Figure \ref{fig:motivation} plots the VP stochastic trajectories (the green curves) connected with our FP-Diffusion forward trajectories (the white curves) under random initialization. We also provide an informal argument on how the anisotropic FP-Diffusions mix with the low-dimensional data distribution in Appendix A.

In fact, to \textbf{unify} the critical damped Langevin diffusion model \citep{dockhorn2022score}, FP-Diffusion is straightforward to generalize to the case when the inverse Riemannian matrix $R^{-1}(x)$ degenerates (contains zero eigenvalues). However, $X_t$ may not converge to this Gaussian stationary distribution from a deterministic starting point. Intuitively, the diffusion part $\sqrt{R^{-1}(x)}dW_t$ is the source of randomness (noise). Suppose $R^{-1}(x)$ degenerates along the $i$-th direction (i.e., corresponding to zero eigenvalue), then no randomness is imposed on this direction, then the $i$-th component $X_t^i$ will be frozen at $X_0^i$. To remedy the issue, we impose additional restrictions, which lead us to the following corollary: 
\begin{corollary} \label{coroll}
Under additional conditions: (1) the symplectic form $\omega \in \mathbb{R}^{2d \times 2d}$ has the block form:
$\omega =
\left(\begin{array}{@{}c|c@{}}
  \begin{matrix}
  0
  \end{matrix}
  & \ A\  \\
\hline
  -A &
  \ 0\ 
\end{array}\right)
$, with a positive-definite matrix $A \in \mathbb{R}^{d \times d}$; (2) the inverse (semi-) Riemannian matrix $R^{-1}(x)$ has the block form:
$R^{-1}(x) =
\left(\begin{array}{@{}c|c@{}}
  \begin{matrix}
  \ 0
  \end{matrix}
  & 0  \\
\hline
  \ 0 &
  B
\end{array}\right) \label{damped}
$, with a constant positive-definite symmetric matrix $B \in \mathbb{R}^{d \times d}$, it induces that the forward diffusion $X_s$ converges to the standard Gaussian distribution:
$$p_s(X_s) \xrightarrow{s \rightarrow \infty} \mathcal{N}(0,I).$$
\end{corollary}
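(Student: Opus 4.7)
The plan is to specialize the general FP-Diffusion SDE (Eq.~\ref{sum}) to the two block structures prescribed in the corollary and then analyze the resulting linear system directly. Writing $X_t = (x_t, v_t)$ with $x_t, v_t \in \mathbb{R}^d$, the assumption on $R^{-1}(x)$ kills every contribution from $R^{-1}$ in the position components and makes the $\partial_j R^{-1}_{ij}$ terms vanish (since $B$ is constant); the symplectic term $-2\omega X_t$ couples the two halves through $A$. A short substitution reduces Eq.~\ref{sum} to the underdamped Langevin system
\begin{equation*}
dx_t = -A v_t \, dt, \qquad dv_t = \bigl(A x_t - \tfrac{1}{2} B v_t\bigr) dt + \sqrt{B} \, dW_t,
\end{equation*}
which is a linear Gaussian SDE $dZ_t = M Z_t \, dt + \sigma \, dW_t$ with constant drift $M = \begin{pmatrix} 0 & -A \\ A & -B/2 \end{pmatrix}$ and degenerate diffusion $\sigma\sigma^T = \begin{pmatrix} 0 & 0 \\ 0 & B \end{pmatrix}$. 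Starting from any $X_0$, the law of $Z_t$ is Gaussian with mean $e^{tM} X_0$ and covariance $\Sigma_t = \int_0^t e^{(t-u)M} \sigma\sigma^T e^{(t-u)M^T} du$, so proving $p_s \to \mathcal{N}(0, I)$ reduces to two algebraic checks: (a) $I$ is the stationary covariance, and (b) $M$ is Hurwitz.

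For (a), I would verify that $\Sigma_\infty = I$ solves the Lyapunov equation $M \Sigma_\infty + \Sigma_\infty M^T + \sigma\sigma^T = 0$; using $A = A^T$ the off-diagonal blocks of $M + M^T$ cancel and the lower-right block equals $-B$, exactly cancelled by $\sigma\sigma^T$. For (b), I would apply the Schur complement to $\det(M - \lambda I_{2d})$ and reduce the characteristic equation to $\det(\lambda^2 I_d + \tfrac{1}{2} \lambda B + A^2) = 0$. If $v \neq 0$ satisfies $(\lambda^2 I + \tfrac{1}{2}\lambda B + A^2) v = 0$, pairing with $v$ gives the scalar quadratic $\alpha \lambda^2 + \beta \lambda + \gamma = 0$ with $\alpha = \lVert v \rVert^2$, $\beta = \tfrac{1}{2}\langle Bv, v\rangle$, $\gamma = \lVert Av\rVert^2$, all strictly positive by the positive-definiteness of $A$ and $B$; hence both roots have strictly negative real part, so $M$ is Hurwitz. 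Combining (a) and (b) yields $e^{tM}X_0 \to 0$ and $\Sigma_t \to I$, establishing the stated convergence.

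The main obstacle is precisely the degeneracy of $R^{-1}$: because $\sqrt{R^{-1}}$ vanishes on the whole position block, the stationary-distribution argument used in Lemma~3.1 (which appeals to non-degenerate Riemannian Langevin theory) cannot be invoked as a black box, and standard parabolic ergodicity theorems do not apply either. The way around this in the plan above is to exploit the fact that, once $H$ is quadratic and $A,B$ are constant, the FP-Diffusion collapses to a genuinely linear Gaussian process, so ergodicity is purely algebraic. A brief check of Hörmander's bracket condition $\text{span}\{\sigma, M\sigma, M^2\sigma, \ldots\} = \mathbb{R}^{2d}$ (which holds because $M\sigma$ has a nonzero position block via $-A$) guarantees uniqueness of the invariant Gaussian and rules out any alternative stationary law, closing the argument; the bulk of the work then lies in the eigenvalue lemma for the quadratic matrix pencil $\lambda^2 I + \tfrac{1}{2}\lambda B + A^2$, which is the only nontrivial ingredient.
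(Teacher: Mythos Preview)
Your argument is correct, but it follows a genuinely different route from the paper. The paper's proof first reduces (somewhat loosely) to the case where $A$ and $B$ are both diagonal, writes the generator in H\"ormander form, computes the brackets $[Y_0,Y_j]$ to verify H\"ormander's condition, and then invokes an abstract hypoelliptic ergodicity result (Proposition~8.1 of \cite{bellet2006ergodic}) to conclude convergence to the Gaussian invariant law. You instead exploit that, under the stated assumptions, the FP-Diffusion is an honest linear Gaussian SDE: convergence is then purely algebraic, reducing to the Lyapunov identity $M+M^T+\sigma\sigma^T=0$ (pinning down $\Sigma_\infty=I$) together with your quadratic-pencil argument showing every eigenvalue of $M$ satisfies $\alpha\lambda^2+\beta\lambda+\gamma=0$ with $\alpha,\beta,\gamma>0$, hence $\mathrm{Re}\,\lambda<0$. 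This is more elementary and self-contained: no hypoellipticity machinery is required, and you avoid the simultaneous-diagonalization reduction, which is not actually ``trivial'' when $A$ and $B$ do not commute. The paper's approach, on the other hand, sits naturally in the degenerate-diffusion framework and would extend to non-constant $B(x)$, where your explicit Gaussian solution is unavailable. One small remark: the H\"ormander check you append at the end is superfluous in your setup, since the Hurwitz property already forces $e^{tM}X_0\to 0$ and $\Sigma_t\to I$ for \emph{every} initial point, which by itself rules out any competing invariant law.
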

We will demonstrate how the corollary derives the damped diffusion model in Appendix A.
\subsection{Parameterizing diffusion models from the optimization perspective}
\label{main:param_forward}
In this section, we illustrate the benefits of parameterized diffusion models from the view of the training objective. Recall that the ground-truth reverse-time (generative) SDE of the forward process $X_t$ is denoted by $Y_t$, 
and we parameterize $Y_t$ by $Y^{\theta}_t$ :
\begin{equation} \label{gener}
dY^{\theta}_t = [f(Y_t,t) - g^2(Y_t,t)\nabla \mathbf{s}_{\theta}(Y_t,t)]dt + g(t)dW_t,   \end{equation}
where $\mathbf{s}_{\theta}$ is the score neural network parameterized by $\theta$. Then the (explicit) score-matching loss function for optimization is
\begin{equation} \label{loss1}
L_{\text{ESM}} := \int_0^T \mathbb{E}_{X_s}[\frac{1}{2}\norm{\mathbf{s}_{\theta}(X_s , s) - \nabla \log p_s(X_s)}^2_{\Lambda (s)}]ds,    
\end{equation}
where $\Lambda (s)$ is a positive definite matrix as a weighting function for the loss. Since $Y_t$ and $X_t$ shares the same marginal distributions, suppose our parameterized generative process $Y^{\theta}_t$ matches $Y_t$ perfectly, namely 
\begin{equation} \label{gap}
\mathbf{s}_{\theta}(x,t) \equiv \nabla \log p_t(x)    
\end{equation}
for all $t \in [0,T]$, then
the marginal distribution of $Y^{\theta}_t$ at $t = 0$ is exactly the data distribution.

The major obstacle of optimizing Eq. \ref{loss1} directly is that we don't have access to the ground truth score function $\nabla \log p_s(x,s)$. Fortunately, $L_{\text{ESM}}$ can be transformed to a loss based on the accessible
conditional score function $\nabla \log p_{X_s |X_0}(X_s)$  plus a constant \citep{song2020sliced,song2020score} (\textbf{for a fixed forward process $X_s$}). More precisely, given two time slices $0 < s < t < T$,
\begin{align} \label{conditional}
\mathbb{E}_{X_t}\norm{\mathbf{s}_{\theta}(X_t , t) - \nabla \log p_t(X_t)}^2 \equiv&\     \mathbb{E}_{X_s, X_t} \norm{\mathbf{s}_{\theta}(X_t , t) - \nabla \log p_t(X_t|X_s)}^2  \\
& + \underbrace{\mathbb{E}_{X_t}\norm{\nabla \log p_t(X_t)}^2 - \mathbb{E}_{X_s, X_t}\norm{\nabla \log p_t(X_t|X_s)}^2}_{\text{gap terms}} .
\end{align}
Since the gap terms between the absolute and conditional score function loss do not depend on the backward generative process, one \textbf{theoretical advantage} of FP-Diffusion is that the gap terms are also parameterized. 
This formula is adapted from \citep{song2020sliced,huang2021variational} by modifying the initial time, and full derivations are given in Appendix A for completeness.

On the other hand, compared with log-likelihood generative models like normalizing flows and VAE, the connection between score matching and the log-likelihood of data distribution $\log p_0(x)$ is not straightforward due to the additional forward noising process $X_t$. 
Hence, we turn to  the variational view established in \citep{huang2021variational}, where the ELBO (evidence lower bound) of data's log-likelihood $\log p_0(x)$ is related with the score matching scheme. More precisely, 
$$\log p_0(x) \ge \mathcal{E}^{\infty}(x),$$
and the ELBO $\mathcal{E}^{\infty}(x)$ of the infinite-dimensional path space is defined by 
$$\mathcal{E}^{\infty}(x) : = \mathbb{E}_{X_T}[\log p_T(X_T)|X_0 = x] - \int_0^T \mathbb{E}_{X_s}[\frac{1}{2}\norm{\mathbf{s}_{\theta}}_{g^2}^2 + \nabla \cdot (g^2 \mathbf{s}_{\theta} - f) | X_0 = x]ds. $$ 
The above implies that learning a diffusion (score) model is equivalent to maximizing the ELBO in the variational path space defined by the generative process $Y_t^{\theta}$. Thus, treating $f(x,t)$ and $g(x,t)$ as learnable functions results in enlarging the variational path space from pre-fixed $f$ and $g$ to flexible variational function classes, and the domain of ELBO (parameterized by $f$ and $g$) is extended correspondingly. 

By Eq. \ref{sum}, in FP-Diffusion model, we set \begin{equation} \label{fg}
f(x,t): =  \frac{\beta ' (t)}{2}[-\sum_j R^{-1}_{ij}(x) x_j -  2\sum_j \omega_{ij} x_j
+ \sum_j \frac{\partial}{\partial x_j} R^{-1}_{ij}(x)],\ \ g(x,t) := \sqrt{\beta ' (t) R^{-1}(x)}.\end{equation}
Since our variational function class of the forward process defined in Eq. \ref{sum} is theoretically guaranteed to approach Gaussian when $T$ is large, the first term of $\mathcal{E}^{\infty}(x)$ is close to a small constant under Eq. \ref{fg}. Therefore, we only need to investigate the second term (equivalent to the implicit score matching~\citep{hyvarinen2005estimation}), which depends on both the parameterized $f$, $g$ and the score function. 
Finally, learning $f$ and $g$ opens the opportunity of adding additional \textbf{regularization penalties} to filter out irregular paths in the extended variational path space. Similar techniques had been applied in continuous normalizing flows \citep{finlay2020train}.
Preliminary exploration on 
applying regularization to FP-Diffusion models is clarified in the experimental section.  

\subsection{A simplified formula of FP-Diffusion}
Although we can always numerically simulate the SDE to sample at a given time $t$,  the empirical success of the Monte-Carlo training of (14) in \citep{ho2020denoising} (see also (7) of \citep{song2020score}) indicates the importance of deriving explicit solutions for the SDE. In this section, we derive the solution formula for a simplified version of $X_t$ defined in Eq. \ref{sum} and implement it on the image generation task.

To obtain the closed-form of the transition probabilistic density function of the forward process $X_t$, we assume that $R^{-1}(x)$ of Eq. \ref{sum} is a constant symmetric positive-definite matrix independent of the spatial variable $x$. Then in the linear SDE region \citep{S2019Applied}, we have the following characterization of the marginal distributions (see Appendix A for a full derivation):
\begin{theorem} \label{simple version}
Suppose the forward diffusion process $X_t$ starting at $X_0$ satisfies the following linear stochastic differential equation:
\begin{equation} \label{simple}
d X_t = \frac{1}{2}\beta'(t)[-R^{-1}X_t - 2 \omega X_t]dt + \sqrt{\beta'(t) R^{-1}} dW_t,    
\end{equation}
for positive-definite symmetric $R$ and anti-symmetric $\omega$. 
Then the marginal distribution of $X_t$ at arbitrary time $t>0$ follows the Gaussian distribution:
$$X_t \sim \mathcal{N}(e^{(-\frac{1}{2}R^{-1} - \omega)\beta(t)}X_0, \mathbf{I} - e^{- \beta(t) R^{-1}}).$$
\end{theorem}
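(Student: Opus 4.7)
The plan is to reduce the time-inhomogeneous equation~\ref{simple} to a constant-coefficient linear SDE via the deterministic time change $\tau = \beta(t)$, and then apply the standard Ornstein--Uhlenbeck solution theory. Setting $\tilde X_\tau := X_{\beta^{-1}(\tau)}$ and noting that $d\tilde W_\tau := \sqrt{\beta'(t)}\, dW_t$ defines a standard Brownian motion in the new time variable, Eq.~\ref{simple} becomes the time-homogeneous linear SDE
$$d\tilde X_\tau = M\tilde X_\tau\, d\tau + \sqrt{R^{-1}}\, d\tilde W_\tau, \qquad M := -\tfrac{1}{2}R^{-1} - \omega.$$
Duhamel's formula (variation of parameters for linear SDEs) yields the closed-form solution
$$\tilde X_\tau = e^{M\tau}X_0 + \int_0^\tau e^{M(\tau - s)}\sqrt{R^{-1}}\, d\tilde W_s,$$
which is Gaussian because it is an affine functional of Brownian increments with a deterministic kernel.

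Taking expectations immediately gives $\mathbb{E}[\tilde X_\tau] = e^{M\tau}X_0 = e^{(-\frac{1}{2}R^{-1} - \omega)\beta(t)}X_0$, matching the stated mean. For the covariance $\Pi(\tau) := \mathrm{Cov}(\tilde X_\tau)$, the Ito isometry produces the Lyapunov-type ODE
$$\dot\Pi(\tau) = M\Pi(\tau) + \Pi(\tau) M^T + R^{-1}, \qquad \Pi(0) = 0.$$
I would then verify the proposed formula $\Pi(\tau) = \mathbf{I} - e^{-\tau R^{-1}}$ by direct differentiation, using the key identity $M + M^T = -R^{-1}$ (which follows from symmetry of $R^{-1}$ and antisymmetry of $\omega$). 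Uniqueness of strong solutions to the linear SDE (Lipschitz drift, constant diffusion) then closes the argument.

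The principal technical subtlety, and the point I would flag as the main obstacle, is the interaction of $\omega$ with $R^{-1}$ under the matrix exponential. A direct substitution of the proposed $\Pi$ into the Lyapunov right-hand side yields
$$M\Pi + \Pi M^T + R^{-1} \;=\; R^{-1} e^{-\tau R^{-1}} + \bigl[\omega,\, e^{-\tau R^{-1}}\bigr],$$
whereas $\dot\Pi = R^{-1} e^{-\tau R^{-1}}$, so the closed form holds exactly when the commutator $[\omega, e^{-\tau R^{-1}}]$ vanishes, i.e.\ when $\omega$ and $R^{-1}$ commute. This commutativity is automatic when $R^{-1}$ is a scalar multiple of the identity, and is also compatible with the block decomposition used in Corollary~\ref{coroll} for the critically-damped Langevin case. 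I would therefore either add this commutation as an implicit hypothesis to the statement, or in full generality record the covariance in integral form $\int_0^{\beta(t)} e^{M(\beta(t)-s)} R^{-1} e^{M^T(\beta(t)-s)}\, ds$ and specialize to the commuting case to recover $\mathbf{I} - e^{-\beta(t) R^{-1}}$.
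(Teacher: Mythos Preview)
Your approach is essentially the same as the paper's: the paper applies the substitution $Y_t = e^{(\frac{1}{2}R^{-1}+\omega)\beta(t)}X_t$ (your Duhamel/variation-of-parameters step, undone time change included) and then computes the covariance via It\^o's isometry rather than the Lyapunov ODE, which is an equivalent calculation. Your diagnosis of the commutator obstruction is exactly right and matches what the paper actually does: its proof does \emph{not} establish the covariance formula in full generality but explicitly splits into the two cases $\omega = 0$ and $R^{-1} = \mathbf{I}$, each of which trivially satisfies $[\omega, R^{-1}] = 0$, so your proposed resolution (add the commutation hypothesis or keep the integral form) is precisely the content of the paper's own argument.
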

To treat the drift and diffusion terms in Eq. \ref{sum} separately, we name Eq. \ref{metric} as the \textbf{FP-Drift}, Eq. \ref{sym} as the \textbf{FP-Noise}. In practice, we parameterize the anti-symmetric matrix $\omega$ in the FP-Drift model and the anti-symmetric matrix $R^{-1}$ in the FP-Noise model. Both the anti-symmetric and symmetric matrices are realized through the exponential map. We leave the implementation details in Appendix A.






  












  







\section{Experiment}
We first use a synthetic 3D dataset to illustrate the significance of parameterizing the forward process according to the data distribution, then validate the effectiveness of our FP-Diffusion model on the image generation task.
\subsection{Flexible SDEs learned from Synthetic 3D examples}
\label{toy}
\begin{figure}[t]
\centering
    \begin{subfigure}[t]{0.32\textwidth}
    \centering
    \includegraphics[width=\textwidth]{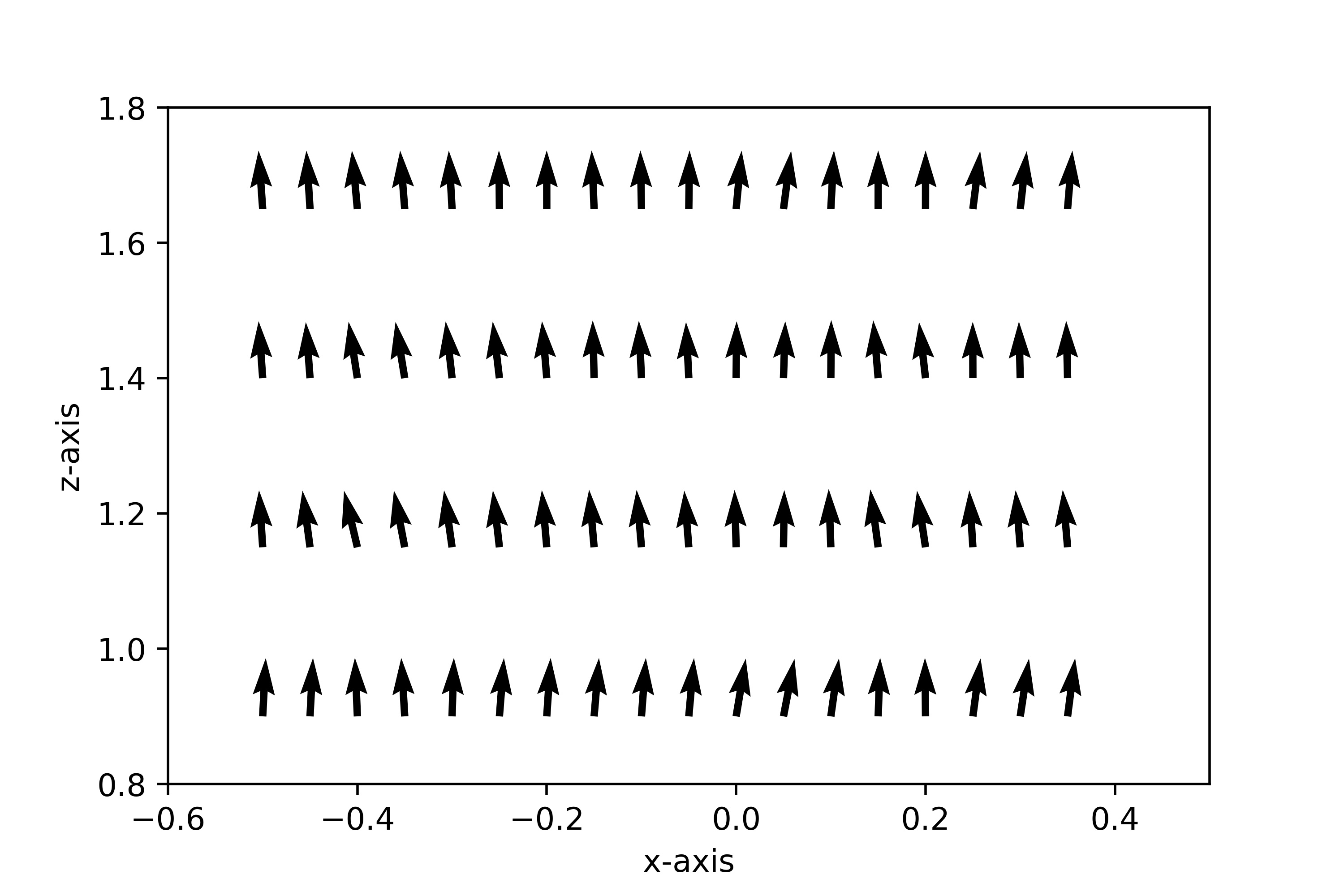}
    \caption{VP}
    \end{subfigure}
    \begin{subfigure}[t]{0.32\textwidth}
    \centering
    \includegraphics[width=\textwidth]{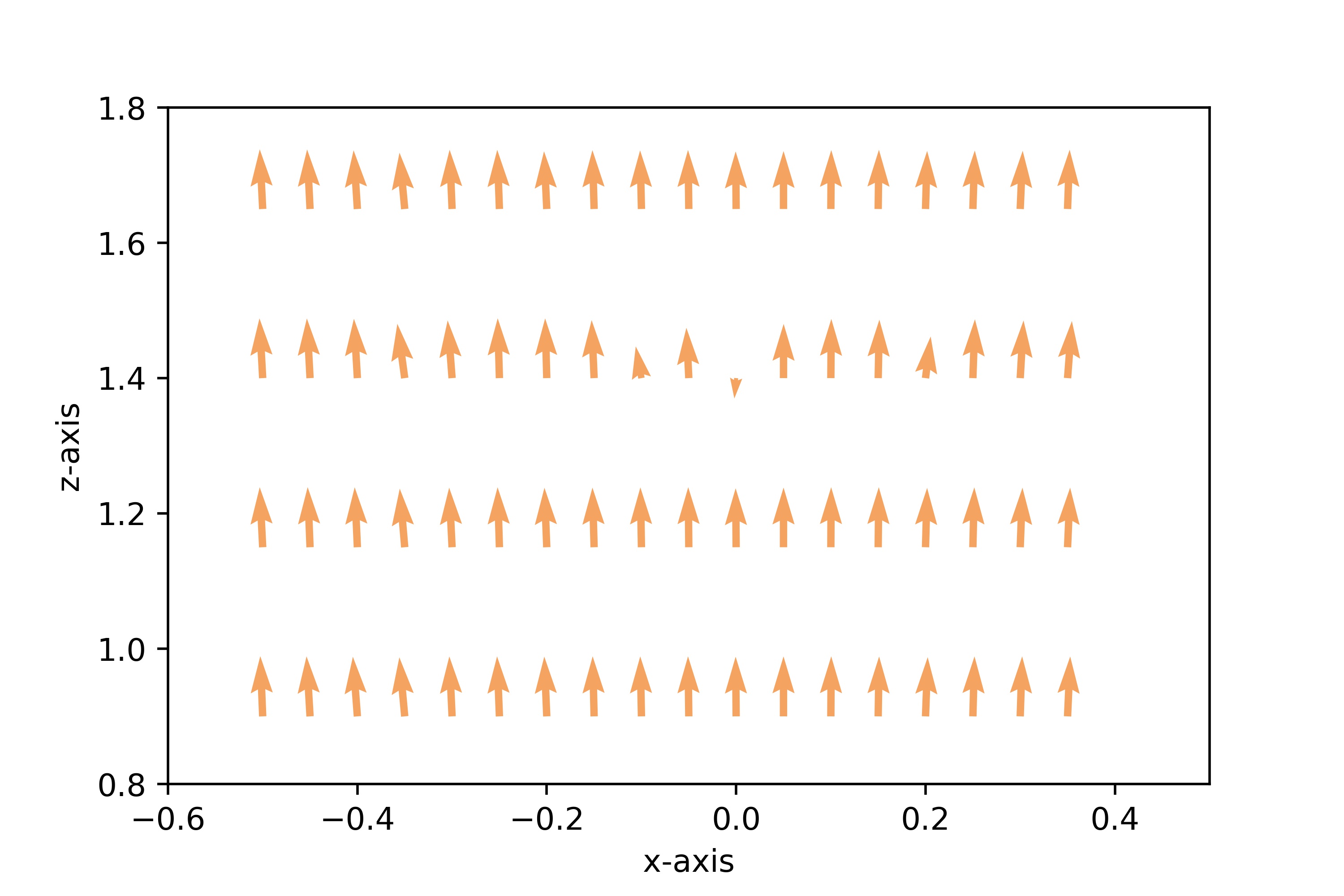}
    \caption{Non-reg. FP}
    \end{subfigure}
    \begin{subfigure}[t]{0.32\textwidth}
    \centering
    \includegraphics[width=\textwidth]{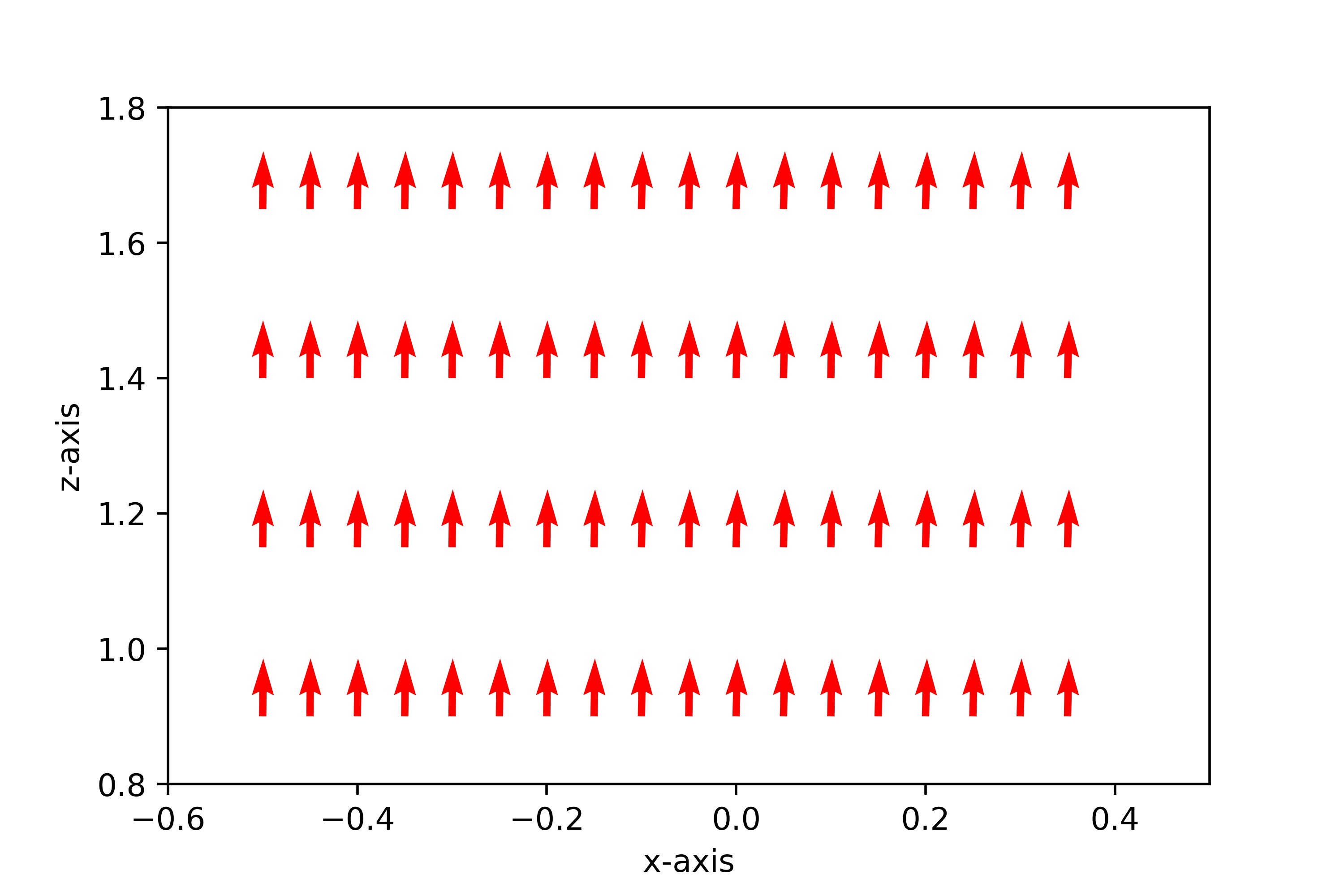}
    \caption{Reg. FP}
    \end{subfigure}
\caption{Vector fields projected into 2D-section of three SDEs}
\label{fig:toy1}
\vspace{-0.4cm}
\end{figure}

By the low-dimensional manifold hypothesis \citep{fefferman2016testing}, the real data distribution concentrates on a low-dimensional submanifold. However, during the generation phase, the dimension of the ambient space we sample from is much higher. 
In this case, FP-Diffusion plays a nontrivial role. To be more precise, note that only the diffusion part of Eq. \ref{sum} can blur the data submanifold to fill in the high-dimensional ambient space, which causes a distinction between the directions tangent to the data manifold and the normal directions for (anisotropic) diffusing. Since it is impossible to detect the data manifold of a complex dataset, we design a simplified scenario to demonstrate how the parameterized diffusion process helps generation.
\begin{figure}[htp]
    \centering
    \begin{minipage}{0.32\textwidth}
        \makeatletter\def\@captype{table}\makeatother\caption{NLLs on MNIST}
        \label{table:mnist}
        \centering
            \begin{tabular}{ccccc}
            \toprule
            Model        & NLL $\downarrow$ \\
            \midrule
            RealNVP \citep{dinh2016density} & 1.06 \\ 
            Glow \citep{kingma2018glow} & 1.05 \\ 
            FFJORD \citep{grathwohl2018ffjord} & 0.99 \\ 
            ResFlow \citep{chen2019residual} & 0.97 \\ 
            DiffFlow \citep{zhang2021diffusion} & 0.93 \\
            \midrule
            FP-Drift (Mix) & 1.01 \\ 
            \bottomrule
            \end{tabular}
    \end{minipage}
    \hspace{0.1cm}
    \begin{minipage}{0.32\textwidth}
        \centering
        \includegraphics[width=\textwidth]{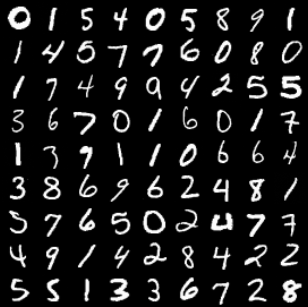}
        \caption{MNIST samples}
        \label{figure:mnist}
    \end{minipage}
    \hspace{0.1cm}
    \begin{minipage}{0.32\textwidth}
        \centering
        \includegraphics[width=\textwidth]{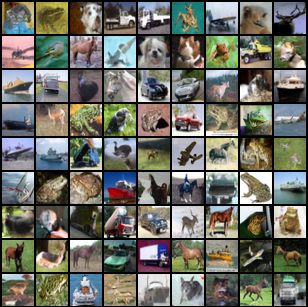}
        \caption{CIFAR10 samples}
        \label{figure:cifar}
    \end{minipage}
    \vspace{-0.3cm}
\end{figure}


\begin{table}[]
    \centering
    \caption{Results on CIFAR10. * denotes the results we reproduce locally.}
    \label{table:cifar10}
    \begin{tabular}{ccc}
            \toprule
            Model           & FID $\downarrow$ & NLL$\downarrow$ \\
            \midrule
            DDPM++ cont. (deep, VP)~\citep{song2020score} & 2.95* & 3.13*\\
            NCSN++ cont. (deep, VE)~\citep{song2020score} & 2.72*  &  - \\
            DDPM~\citep{zhang2021diffusion} & 3.17 & $\leq\text{3.75}$ \\
            Improved-DDPM~\citep{nichol2021improved} & 2.90 & 3.37\\
            LSGM~\citep{vahdat2021score} & 2.10 & $\leq\text{3.43}$\\
            LSGM-100M~\citep{dockhorn2022score} & 4.60 & $\leq\text{2.96}$\\
            CLD-SGM~\citep{dockhorn2022score} & 2.25 & $\leq\text{3.31}$\\
            DiffFlow~\citep{zhang2021diffusion} & 14.14 & 3.04 \\
            \midrule
            FP-Drift (Joint) & 4.17 & 3.30 \\
            FP-Noise (Joint) & 3.30 & 3.25\\
            FP-Drift (Mix) & 2.99 & 3.28 \\
            FP-Noise (Mix) & 2.87 & 3.20\\
            \bottomrule
    \end{tabular}
    \vspace{-0.3cm}
\end{table}
            
Assume the data lies in $\mathbf{R}^3$, and its probabilistic distribution intrinsically follows the 2-dimensional Gaussian concentrated at a hyper-plane. The most efficient way to generate the 2-dimensional Gaussian is to project the random points sampled from 3-dimensional Gaussian to this plane. 
To quantify this fact, we consider the optimal transport problem from the 3-dimensional Gaussian distribution to the 2-dimensional Gaussian.
Define the cost function as $c(x,y) := \norm{x-y}^2$, then the the Wasserstein distance between $\mathcal{N}(0,\mathbf{I})$ and $\mathcal{N}(\mu,\Sigma)$ \citep{mallasto2017learning} equals $\mathcal{W}_2(\mathcal{N}(0,\mathbf{I},\mathcal{N}(\mu,\Sigma)) = \norm{\mu}^2 + \text{Tr} (\mathbf{I} + \Sigma -2\Sigma^{1/2}).$ It implies that corresponding optimal transport map is exactly the vertical projection map, denoted by $\nabla \phi$. 

For FP-Diffusion, the probabilistic flow of the generating process depends on the learnable forward process, 
which allows us to add regularization penalty terms to the score-matching loss to choose preferable paths. From (13) of \citep{song2020score}, the vector field of the probability flow ODE is
\begin{equation} \label{pc}
v_{pc}(x,t) : = f(x,t) -  \frac{1}{2}\nabla \cdot[g^2(x,t)] - \frac{1}{2}g^2(x,t) \nabla \log p_t (x), \\ \ x \in \mathbf{R}^3.    
\end{equation}

Now we check whether the direction of the learned $v_{pc}$ is aligned with the ground-truth projection vector field. For the projection map $\nabla \phi$ from the 3-dimensional Gaussian to 2-dimensional Gaussian supported at the plane :$z = 2$, the corresponding vector field at a spatial point $x=(x,y,z) \in \mathbb{R}^3$ is:
\begin{equation} \label{ground}
v_{\text{proj}}(x,t) : = (0,0,-1) \ \text{if}\ \ z >2, \ \ \text{and}\ \ v_{\text{proj}}(x,t) : = (0,0,1) \ \text{if}\ \ z <2.\end{equation}
We perform the experiment under three circumstances: 1. a fixed VP forward process (Eq. \ref{ou}) diffusion model; 2. our parameterized forward process with no regularization; 3. our parameterized forward process with regularization terms. The  regularization penalties are imposed on the vector field (Eq. \ref{pc}), which are adapted from section 4 of \citep{finlay2020train}: $L_{reg}(f,g) = \lambda_1 \int \norm{v_{pc}(s)}^2 ds + \lambda_2 \mathbb{E}_{\epsilon \sim \mathbf{N}(0,1)} \norm{\epsilon^Tv_{pc}(s)}^2 ds.$
This term is only designed for regularizing the parameterized $f$ and $g$ of the \textbf{forward process}. 

The 2D visualization results of our comparative experiments are summarized in Fig. \ref{fig:toy1}. Notice that the `ground-truth' vector field (Eq. \ref{ground}) is strictly vertical, so we plot the $x-z$ projection of the trained three vector fields at a given time for the three scenarios. From Fig. \ref{fig:toy1}, although our flexible diffusion method (b) is visibly more vertical than the forward-fixed VP (ddpm) model (a), the flexible model with regularization (c) is more close to vertical lines. We also sample the integration trajectories of the trained vector fields for comparison in Appendix B (see Figure 5). 

\subsection{Image Generation}
In this section, we demonstrate the generative capacity of the score models driven by our FP-Diffusion on two common image datasets: MNIST \citep{lecun1998mnist} and CIFAR10 \citep{krizhevsky2009learning}.

\textbf{Training strategy. }The flexible FP-Diffusion framework is designed to simultaneously learn a suitable forward diffusion process dependent on the data distribution as well as the corresponding reverse-time process. However, 
for some complex scenarios like image generation, it is challenging to balance the optimization of the FP-Diffusion model and the score model. To compromise these two parts, we propose a two-stage training strategy. Particularly, in the first stage, we jointly optimize the parameters of both the FP-Diffusion model and the score neural network; in the second stage, we fix all parameters in the FP-Diffusion and only tune the score neural network as the prevailing score-based MCMC approaches \citep{ho2020denoising, song2019generative, song2020score}.

\textbf{Implementation Details. } For the forward diffusion process, we choose a linearly increasing time scheduler $\beta(t)$ (same as the VP-SDE setting in \citep{song2020score}), where $t \in [0, T]$ is a continuous time variable. To estimate the gradient vector field in the reverse-time process, we train a time-dependent score network $s_\theta(x(t), t)$ as described in Eq. \ref{conditional}.
We adopt the same U-net style architecture used in \citep{ho2020denoising} and \citep{song2020score} as our score network. We train both the FP-Drift model and the FP-Noise model in two training paradigms: 1) \textbf{Joint Training}: the parameterized FP-Diffusion model and the score network are jointly optimized for 1.2M iterations; 2) \textbf{Mix Training}: following the proposed two-stage training strategy, we separately train the model for 600k iterations in both stages. The batch size is set to 96 on all datasets. We apply the Euler-Maruyama method in our reverse-time SDEs for sampling images, where the discretization steps are set to 1000 as in \citep{song2020score}. All the experiments are conducted on 4 Nvidia Tesla V100 16G GPUs. We provide further implementation details in Appendix B.

\textbf{Results. } We show the sampled images generated by our FP-Noise (Mix) model in Fig. \ref{figure:mnist} and Fig. \ref{figure:cifar}.
According to Eq. \ref{pc}, we calculate the negative log-likelihood (NLL) in bits per dimension for our models by the instantaneous change of variables formula \citep{grathwohl2018ffjord}. Then we list the NLL metrics of our models in Tab. \ref{table:mnist} and Tab. \ref{table:cifar10}. On MNIST, our FP-Drift model achieves comparable performance in terms of NLL, compared to five flow-based models (including DiffFlow \citep{zhang2021diffusion}). On CIFAR10, both the FP-Drift (Mix) and the FP-Noise (Mix) models achieve a competitive performance compared to the state-of-the-art (SOTA) diffusion models. These results illustrate the strong capacity of FP-Diffusion in density estimation tasks.

To quantitatively evaluate the quality of sampled images, we also report the Fenchel Inception Distance (FID) \citep{heusel2017gans} on CIFAR10. As shown in Tab. \ref{table:cifar10}, the two variants of our FP-Diffusion model, FP-Drift (Mix) and FP-Noise (Mix), outperform DDPM \citep{ho2020denoising} and Improved-DDPM \citep{nichol2021improved} in FID and have a comparable performance with DDPM++ cont. (deep, VP) and NCSN++ cont. (deep, VE) \citep{song2020score}. We notice that only LSGM and CLD-SGM have obviously better FID values than other models (including us). However, LSGM \citep{vahdat2021score} adopts a more complicated framework and a large model with $\approx 475M$ parameters to achieve its high performance. With a comparable parameter size ($\approx 100M$), our models could achieve a significantly better FID score than LSGM (``LSGM-100M''). CLD-SGM builds its diffusion model upon a larger phase space with a special training objective (given the data point $x \in \mathbb{R}^n$, its phase space corresponding point $(x,v)$ belongs to $\mathbb{R}^{2n}$), which leads to a more expressive optimization space but brings extra computational cost as well. We leave testing our FP-Diffusion model on phase space (defined in Corollary \ref{coroll}) in the future. It should also be noted that we use a smaller batch size ($96$) compared to other baseline diffusion models ($128$) to train our models due to limited computational resources, which may influence our performance to some extent. We also report the performance of our two model variants in two training paradigms in Tab. \ref{table:cifar10}. The model variants with the joint training paradigm consistently achieve a better performance, demonstrating the necessity of the two-stage training strategy. A possible reason of this phenomenon is that it is difficult for score models to match the reverse process of a dynamical forward process, so we need to re-train the score model with extra training steps after learning a suitable forward process.
\section{Conclusion}
We propose FP-Diffusion model, a novel method that parameterizes the spatial components of the diffusion (score) model with theoretical guarantees. This approach combines insights from Riemannian geometry and Hamiltonian Monte-Carlo methods to obtain a  flexible forward diffusion framework that plays a nontrivial role from the variational perspective. Empirical results on specially-designed datasets and standard benchmarks confirm the effectiveness of our method. However, efficiently optimizing FP-Diffusion remains a critical challenge, which opens the door for promising future research.

\bibliography{neurips_2022}
\bibliographystyle{plain}
\appendix

\section{Theory}

\subsection{Discussion of Section 3.1} \label{Section 3.1}
\subsubsection{Remark on the theoretical properties of Hamiltonian dynamics}
Suppose $X_t$ follows the Hamiltonian dynamics (\ref{sym}), then
$$dH(X_t) = \nabla H(X_t) \omega \nabla H(X_t) dt \equiv 0,$$
by the anti-symmetry of $\omega$. Therefore, the Hamiltonian dynamics without random perturbations is a deterministic motion that can explore within a constant Hamiltonian (energy) surface. It means that, only by adding a diffusion term, the Hamiltonian dynamical system is able to traverse different energy levels. 

\subsubsection{Verification of Theorem \ref{general}}
We will verify the theorem under the more general case, when $H(x) = \frac{m}{2}x^2$. The corresponding stationary distribution is the scaled Gaussian $\mathcal{N}(0,m\mathbf{I})$, where $m >0$ is the scale constant. In this case, Eq. \ref{sum} is modified to:

\begin{align} \nonumber
dX_t = \frac{m}{2}[-\sum_j R^{-1}_{ij}(X_t)\cdot(X_t)_j - & 2\sum_j \omega_{ij}\cdot(X_t)_j
+ \sum_j \frac{\partial}{\partial x_j} R^{-1}_{ij}(X_t)] dt\\ + & \sqrt{R^{-1}(X_t)}dW_t.  \label{sum2} \end{align}    
Note that only the drift term is scaled by $m$.
\begin{proof}
Since the covariance matrix of the diffusion part is positive-definite, the forward process Eq. \ref{sum2} satisfies the Feller property and the existence and uniqueness of the stationary distribution are guaranteed (see \citep{2014Analysis}). 
By the Fokker-Plank-Kolmogorov equation, the stationary distribution $p_s(x)$ of Eq. \ref{sum2} should satisfy
\begin{equation} \label{hold}
0 = -\sum_i \frac{\partial}{\partial x_i}[f_i(x,t)p_s(x)] + \frac{1}{2}\frac{\partial^2}{\partial x_i \partial x_j}[(gg^T)_{ij}p_s(x)],    
\end{equation}
where we set $f(x,t):= \frac{m}{2}[-\sum_j R^{-1}_{ij}(x)\cdot x_j - 2\sum_j \omega_{ij}\cdot x_j 
+ \sum_j \frac{\partial}{\partial x_j} R^{-1}_{ij}(x)]$ and $g(x,t) := \sqrt{R^{-1}(x)}$. To check whether $e^{- \frac{m}{2}x^2}$ satisfies condition (\ref{hold}), notice that by the anti-symmetry of $\omega_{ij}$, we automatically have
\begin{align*}
\sum_i\sum_j \frac{\partial}{\partial x_i} ( \omega_{ij} x_j e^{-\frac{m}{2}x^2})  = - \sum_i\sum_j \omega_{ij} x_i x_j e^{-\frac{m}{2}x^2}=0.
\end{align*}
On the other hand,
\begin{align*} 
\sum_i\sum_j \frac{\partial^2}{\partial x_i \partial x_j}[R^{-1}_{ij}(x)e^{-\frac{m}{2}x^2}] &= -m \text{Tr}(R^{-1}_{ij}(x))e^{-\frac{m}{2}x^2}
+ m^2 \sum_i\sum_j R^{-1}_{ij}(x) x_i x_j e^{-\frac{m}{2}x^2}\\
& - \sum_i\sum_j \frac{\partial}{\partial x_j}(R^{-1}_{ij}(x))(\frac{\partial}{\partial x_i}e^{-\frac{m}{2}x^2})\\
&+ \sum_i\sum_j \frac{\partial^2}{\partial x_i \partial x_j}(R^{-1}_{ij}(x))e^{-\frac{m}{2}x^2}\\
& -m \sum_i\sum_j \frac{\partial}{\partial x_j}(R^{-1}_{ij}(x))x_ie^{-\frac{m}{2}x^2}\\
=& -m\text{Tr}(R^{-1}_{ij}(x))e^{-\frac{m}{2}x^2}
+ m^2\sum_i\sum_j R^{-1}_{ij}(x) x_i x_j e^{-\frac{m}{2}x^2}\\
&+ \sum_i \frac{\partial}{\partial x_i}[\sum_j \frac{\partial}{\partial x_j}R_{ij}^{-1}(x) e^{-\frac{m}{2}x^2}] \\
& - m \sum_i\sum_j \frac{\partial}{\partial x_j}R^{-1}_{ij}(x)x_ie^{-\frac{m}{2}x^2}.
\end{align*}
Therefore, the last thing to check is that
\begin{align*}
\sum_i & \frac{\partial}{\partial x_i} [\sum_j R^{-1}_{ij}(x) x_j e^{-\frac{m}{2}x^2}] =  \text{Tr}(R^{-1}_{ij}(x))e^{-\frac{m}{2}x^2}-\\
& \sum_i\sum_j[ m R^{-1}_{ij}(x) x_i x_j  +  \frac{\partial}{\partial x_j}R^{-1}_{ij}(x)x_i]e^{-\frac{m}{2}x^2},
\end{align*}
which is obviously true, since the diffusion matrix $R^{-1}_{ij}$ is symmetric. Combining the above, we have proved that Eq. \ref{hold} holds if $p_s(x) \propto e^{-\frac{m}{2}x^2}$.
\end{proof}
\subsubsection{Completeness of FP-Diffusion parameterization}
From the last section's derivation, we can deduce the following corollary:
\begin{corollary} \label{complete}
Consider the following SDE: 
$$dX_t = A(X_t)dt - \frac{1}{2}R^{-1}(X_t) \cdot X_t dt  + (\nabla \cdot R^{-1}(X_t)) \cdot X_t dt + \sqrt{R^{-1}(X_t)}dW_t,$$
and let the spatial function $A(x)$ be a linear function. Suppose we know its
stationary distribution is standard Gaussian, then
$$A(x) =  - \sum_j \omega_{ij}\cdot x_j 
,$$
for some anti-symmetric matrix $\omega$.
\end{corollary}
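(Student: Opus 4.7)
The plan is to feed the given SDE into the stationary Fokker-Planck-Kolmogorov (FPK) equation for the standard Gaussian density $p_s(x) \propto e^{-\|x\|^2/2}$, then exploit linearity of the FPK operator in the drift to peel off everything that has already been verified in the previous subsection. Concretely, decompose the full drift as $A(x) + f_0(x)$, where $f_0$ collects the $R^{-1}$-dependent pieces that match the $\omega = 0$, $m = 1$ specialization of Theorem \ref{general}. The computation already carried out in that verification shows that $f_0$ together with the diffusion coefficient $\sqrt{R^{-1}(x)}$ makes the FPK equation hold for $p_s$, so the only residual condition imposed by the extra drift $A$ is
\begin{equation*}
\sum_i \partial_{x_i}\bigl[A_i(x)\, p_s(x)\bigr] = 0.
\end{equation*}

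Since $\nabla \log p_s(x) = -x$, this is equivalent to the pointwise identity $\nabla \cdot A(x) = x \cdot A(x)$ on $\mathbb{R}^n$. I now invoke the linearity hypothesis: writing $A(x) = Mx$ for some $M \in \mathbb{R}^{n \times n}$, the left-hand side collapses to the constant $\mathrm{Tr}(M)$, while the right-hand side is the quadratic form $x^{\top} M x$, which depends only on the symmetric part $\tfrac{1}{2}(M + M^{\top})$. A quadratic form on $\mathbb{R}^n$ equal to a constant for all $x$ must be identically zero, so $M + M^{\top} = 0$; anti-symmetry then forces $\mathrm{Tr}(M) = 0$ automatically and the constant drops out for free. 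Setting $\omega := -M$ yields $A_i(x) = -\sum_j \omega_{ij} x_j$ with $\omega$ anti-symmetric, which is exactly the desired form.

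The argument has essentially no analytic depth beyond the FPK computation already done for Theorem \ref{general}; the only thing to be careful about is the additive splitting of the drift, which is justified by the linearity of the FPK operator in the drift vector field (so $A$ contributes independently of the $R^{-1}$ pieces, without generating cross-terms). The main point worth flagging is that the linearity assumption on $A$ is genuinely used rather than cosmetic: dropping it would leave the underdetermined PDE $\nabla \cdot A = x \cdot A$, which admits many non-linear solutions (for example, any rotationally invariant tangential vector field in the plane satisfies both sides $\equiv 0$). Thus the corollary really characterizes the \emph{linear} gauge freedom inside the FP-Diffusion parameterization, and the anti-symmetric matrix $\omega$ is the only new linear degree of freedom one can add to the Riemannian-Langevin drift while preserving the standard Gaussian stationary law.
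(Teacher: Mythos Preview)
Your proof is correct and follows essentially the same route as the paper: both arguments feed the drift into the stationary FPK equation, invoke the verification of Theorem~\ref{general} to cancel the $R^{-1}$-dependent pieces, and are left with the residual condition $\sum_i \partial_{x_i}[A_i(x)\,e^{-\|x\|^2/2}]=0$, which forces the linear map to be anti-symmetric via a quadratic-form identity. The only cosmetic difference is that the paper first decomposes $A$ into its symmetric and anti-symmetric parts and shows the symmetric part vanishes, whereas you work with the full matrix $M$ and conclude $M+M^{\top}=0$ directly; these are the same computation reordered.
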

\begin{proof}
In fact, every linear operator $A$ can be decomposed into a symmetric part plus an anti-symmetric part:
$$A = \underbrace{\frac{A + A^T}{2}}_{\text{symmetric}} + \underbrace{\frac{A - A^T}{2}}_{\text{anti-symmetric}}.$$
Let $\omega = \frac{A - A^T}{2}$. Then we only need to prove that $A + A^T$ equals zero, if $X_t$ converges to Gaussian. 

From the proof of theorem 3.3, we extract the fact that if $p_s(x) \propto e^{-\frac{1}{2}x^2}$,
$$\sum_i \frac{\partial}{\partial x_i}[(A + A^T)_{ij}\cdot x_j e^{- \frac{1}{2}x^2}] = 0,$$
then
$$\sum_{i,j}[(A + A^T)_{ij}\cdot  \frac{\partial}{\partial x_i}\frac{\partial}{\partial x_j}(e^{- \frac{1}{2}x^2})] = 0,$$
for all $x =(x_1,\dots,x_n)$.
Note that 
$$\frac{\partial}{\partial x_i}\frac{\partial}{\partial x_j}(e^{- \frac{1}{2}x^2}) = (x_ix_j - \delta_{ij} )e^{- \frac{1}{2}x^2}.$$ 
Since $A + A^T$ is symmetric (\textbf{doesn't hold for arbitrary linear operator}), it implies that $A+A^T \equiv 0$.
\end{proof}
\subsubsection{Anisotropic diffusion on low dimensional data manifold}
In this section, we give an informal discussion on how an anisotropic diffusion starting at a low-dimensional data manifold mixes with its own stationary distribution (supported in the high dimension ambient space). 

Assume the marginal distribution of the diffusion process $X_t$ concentrates on a low dimensional manifold $M \hookrightarrow \mathbb{R}^n$ at a given time. Moreover, suppose $X_t$ already achieves the Gaussian stationary distribution on $M$ (defined with respect to the Laplacian operator of $M$).  Now we want to informally investigate the most efficient way for $X_t$ to diffuse out of the low dimensional sub-manifold to the ambient space. By localizing in the Riemannian normal coordinates and by arranging the coordinates indexes, we can further assume that $M$ is isometric to the hyperplane of $\mathbb{R}^n$ defined by 
$$ M = \{x \in \mathbb{R}^n | x = (x_1,\dots,x_p,0,\dots,0)\}.$$
Then the coordinate components of each point $x \in \mathbb{R}^n$ can be decomposed into the tangential directions and the normal directions with respect to $M$:
$$x \in (\underbrace{x_1,\dots,x_p}_{\text{tangent to M}},\underbrace{x_{p+1},\dots,x_n}_{\text{normal to M}}).$$
Under the above conditions, we are ready to compare the convergence rate (to the high-dimensional stationary Gaussian distribution of $\mathbb{R}^n$) of different forward diffusions defined in (\ref{metric}). For a fair comparison, we set the norm of the noise matrix to be one: $\norm{R^{-1}}_2 \equiv \sqrt{n}$. Otherwise, the convergence can always be accelerated by increasing the noising scale ($\norm{R^{-1}}_2 \rightarrow \infty$).

Under our normal coordinates, the forward diffusion can be decomposed into two parts :$X(t) = X_{tan}(t) + X_{nor}(t)$. For simplicity, suppose $R^{-1}$ is a diagonal matrix, then the tangential part and the normal part of $X(t)$ is completely decoupled. In other word,
$$X_{tan}^i(t) = \frac{1}{2}[- R^{-1}_{ii}\cdot(X_t)^i
] dt + \sqrt{R^{-1}_{ii}}dW_t^i, \ \ 1 \leq i \leq p$$
is a diffusion process on $M$. Therefore, $(X_{tan}(t), X(t))$ is indeed a Markov coupling. Suppose $X_{tan}(t)$ at $t=0$ already converges to its stationary distribution (low dimensional Gaussian), then by Ito's formula,
\begin{align*}
d (X_{tan}(t) - X(t))^2 &= d X_{nor}^2(t)\\
& = 2X_{nor}(t)(\frac{1}{2}[- R^{-1}_{nor}\cdot X_{nor}(t)
] dt + \sqrt{R^{-1}_{nor}} dW_t) + \text{Tr}(R^{-1}_{nor})dt.
\end{align*}
Taking the expectation of both sides, it implies that
$$\frac{d \mathbb{E} X_{nor}^2(t)}{dt} = - \mathbb{E} R^{-1}_{nor}\cdot X_{nor}^2(t) +  \text{Tr}(R^{-1}_{nor}).$$
Let $r_{min}$ denote the minimal eigenvalue of the normal part of $R^{-1}$, then
$$\frac{d \mathbb{E} X_{nor}^2(t)}{dt} \leq - r_{min} \mathbb{E} X_{nor}^2(t) +  \text{Tr}(R^{-1}_{nor}).$$
Applying Grönwall's inequality and note that $X_{nor}(0) = 0$, we have
$$\mathbb{E} X_{nor}^2(t) \leq e^{-r_{min} \cdot t} \cdot \text{Tr}(R^{-1}_{nor})t.$$
The above gives an upper bound on the convergence speed of the coupling $(X_{tan}(t), X(t))$ with respect to the $W_2$ distance (see \cite{2014Analysis}). Since the stationary distribution of $X(t)$ is exactly the high dimensional Gaussian distribution (the diffusion model's prior distribution), we hope the convergence rate to be as fast as possible (given a fixed noising scale). For the VP-Diffusion,
$$R^{-1}_{nor} \equiv \text{diag} \{1, \dots, 1\}.$$
 However, in FP-Diffusion model, the diagonal elements of $R^{-1}_{nor}$ are allowed to be inhomogeneous and greater than one (under the condition that $\text{Tr}(R^{-1}_{nor}) < n$). This will lead to a smaller $r_{min}$, which will speed up the convergence rate by our analysis.
\subsubsection{Verification of Corollary \ref{coroll}}
The intuition of Corollary \ref{coroll} can be stated as follows: To guarantee the geometric ergodicity property of FP-Diffusion on the \textbf{phase space}, we need enough noise such that the diffusion process can transverse the whole space. Suppose $R^{-1}(x)$ degenerates along the $i$-th direction (corresponding to a zero eigenvalue), then no randomness (noise) is imposed on this direction.

To remedy the issue, we require the symplectic form $\omega$ to be non-zero along the i-th direction, which makes it possible to mix the noise originated along other directions (where $R^{-1}(x)$ is strictly positive-definite) with the i-th direction. Now we give the formal proof:
\begin{proof}
We only prove for the simplified case when $A$ and $B$ are both diagonal matrices with two sets of positive eigenvalues $\{a_i\}_{i=1}^d$, $\{b_i\}_{i=1}^d$. The general situation can be handled by trivial linear transformation. By 
proposition 8.1 of \citep{bellet2006ergodic}, the proof boils down to prove that the Hörmander's condition \citep{hormander1967hypoelliptic} holds for the forward process $X_t$. When $R^{-1}(x)$ is a constant matrix, the infinitesimal generator $L$ of (\ref{sum}) is:
$$L = \sum_i \sum_j \frac{1}{2}[-R^{-1}_{ij} - 2 \omega_{ij}]x_j \frac{\partial}{\partial x_j} + \frac{1}{2}\sum_{ij}R^{-1}_{ij} \frac{\partial^2}{\partial x_i \partial x_j}.
$$
For notation simplicity, denote $x:= (u,v) \in \mathbb{R}^{2d}$, where $u,v \in \mathbb{R}^{d}$. 
To put the second-order differential operator $L$ in Hörmander's form, set 
$$Y_j (u,v) = - \frac{1}{2}\sqrt{b_j} \frac{\partial}{\partial v_j},\ \ 1 \leq j \leq n,$$
and
$$Y_0 (u.v) = \sum_i (-a_i v_i \frac{\partial}{\partial u_i} + a_i u_i \frac{\partial}{\partial v_i}).$$
Then it suffices to show that the vector fields $\{[Y_0,Y_j],Y_j\}_{1 \leq j \leq d}$ span the whole $\mathbb{R}^{2d}$. By direct calculation,
$$[Y_0,Y_j] = \frac{1}{2}a_j\sqrt{b_j} \frac{\partial}{\partial u_j},$$
for $\forall j$. Therefore, we conclude that the Hörmander's condition holds for $X_t$. Then the ergodic proposition 8.1 of \citep{bellet2006ergodic} implies that the forward diffusion $X_s$ converges to the standard Gaussian distribution.
\end{proof}

\begin{remark}
A recent study \citep{dockhorn2022score} proposed to improve the diffusion model by enlarging the spatial space (where the generated samples lie in) to the "phase" space: $x \rightarrow (x,v)$. Then the corresponding joint forward diffusion $(x_t,v_t)$ satisfies the Critically-Damped Langevin diffusion:
\begin{equation} \label{eq:langevin_sde}
    \begin{pmatrix} dx_t \\ dv_t \end{pmatrix} = \begin{pmatrix} M^{-1} v_t \\ -x_t \end{pmatrix}dt +  \begin{pmatrix} \textbf{0}_d \\ -\Gamma M^{-1} v_t \end{pmatrix} dt +
    \begin{pmatrix} 0 \\ \sqrt{2\Gamma} \end{pmatrix}dW_t.
\end{equation}
If the coupling mass $M=1$, the drift part of Eq. \ref{eq:langevin_sde} can be decomposed to a symmetric part $R^{-1}$ and an \textbf{non-trivial} anti-symmetric part $\omega$ of (\ref{simple}) by setting:
$$R^{-1}:= \left(\begin{array}{@{}cc@{}}
  \begin{matrix}
  \ 0, 
  \end{matrix}
  & 0  \\

  \ 0,  &
  2 \Gamma I
\end{array}\right)\ \ ,\ \  \omega:= \left(\begin{array}{@{}cc@{}}
  \begin{matrix}
  \ 0, 
  \end{matrix}
  & -I  \\

  \ I,  &
  0
\end{array}\right).$$
It's straightforward to check that they rigorously fit the conditions of Corollary \ref{damped}. Therefore, we conclude from Corollary \ref{damped} that the Damped Langevin diffusion converges to the standard Gaussian distribution of the enlarged phase space $(x,v) \in \mathbb{R}^{2d}$, which coincides with the results of Appendix B.2 in \citep{dockhorn2022score}.  
\end{remark}

\subsection{Discussion of Section 3.2} \label{Section 3.2}
In this section, following the arguments from \citep{huang2021variational}, we demonstrate how to estimate the score gradient vector field $\nabla \log p(x)$ by the analytically tractable conditional score gradient vector field (conditioned on a previous time).

To prove (\ref{conditional}), by adapting Eq. 31 of \citep{huang2021variational}, it's enough to show that
$$\mathbb{E}_{X_t} [s^T_{\theta}(X_t , t) \cdot \nabla \log p_t(X_t)] = \mathbb{E}_{X_s, X_t}[s^T_{\theta}(X_t , t) \cdot \nabla \log p_t(X_t|X_s)].$$
Transforming the expectation to probabilistic integration, we have
\begin{align} \label{con}
\mathbb{E}_{X_t} [s^T_{\theta}(X_t , t) \cdot \nabla \log p_t(X_t)] & = \int p_t(x) s^T_{\theta}(x , t) \cdot \nabla \log p_t(x) dx\\
& = \int s^T_{\theta}(x , t) \int \nabla p_t(x|x_s) p_s(x_s) dx dx_s\\
& =\int \int p_s(x_s) p_t(x|x_s) \nabla p_t(x|x_s) dx dx_s\\
& = \mathbb{E}_{X_s, X_t}[s^T_{\theta}(X_t , t) \cdot \nabla \log p_t(X_t|X_s)],
\end{align}
for $0 \leq s < t$. By quadratic expanding $\mathbb{E}_{X_t}\norm{\mathbf{s}_{\theta}(X_t , t) - \nabla \log p_t(X_t)}^2$ and plugging in (\ref{con}), equality (\ref{conditional}) follows directly. 

To implement our discretized FP-diffusion forward diffusion, we usually choose $s = t-1$, the immediate time step before $t$. Then from $t-1$ to $t$, the conditional score gradient vector field of $p_t (x_t|x_{t-1})$ is the Gaussian score function, which is analytically tractable.

\subsection{Discussion of Section 3.3} \label{Section 3.3}
In this section, we prove Theorem \ref{simple version} by applying Ito's formula and martingale representation theorem.

Recall that the time-change of Eq. \ref{sum} satisfies
\begin{equation}
\label{eq:f_sde}
    dX_t = \beta'(t)(-\frac{1}{2}R^{-1} - \omega) X_t dt + \sqrt{\beta'(t)R^{-1}}dW_t,
\end{equation}
where $X_0$ is a fixed point. Let $Y_t : = e^{(\frac{1}{2}R^{-1} + \omega)\beta(t)} X_t$, then by Ito's formula,
\begin{equation} \label{Y}
 Y_t =\int_0^t e^{(\frac{1}{2}R^{-1} + \omega)\beta(s)} \sqrt{\beta'(s)R^{-1}}dW_s.
\end{equation}
From the martingale representation theorem, $Y_t$ is a Gaussian random variable for each $t$. Therefore, to fully determine the distribution of $X_t$, we only need to calculate the expectation and variance formulas of $X_t$. By the definition of stochastic integration, we have
$$\mathbf{E}[X(t)] =e^{(-\frac{1}{2}R^{-1} - \omega)\beta(t)}X_0. $$
Utilizing the Ito's isometry to (\ref{Y}), we get
$$Var[Y_t] =\int_0^t e^{\beta(s)(R^{-1} + 2\omega)} \beta'(s)R^{-1} ds.$$
Suppose $\omega = 0$, then
$$Var[X_t] =\mathbf{I} - e^{- \beta(t) R^{-1}},$$
where $\mathbf{I}$ denotes the identity matrix of $\mathbf{R}^d$.
Suppose $R^{-1} = \mathbf{I}$, since the Lie bracket $[\mathbf{I} + 2\omega, \mathbf{I} - 2\omega] =0$, we further obtain
$$Var[X_t] =\mathbf{I} - e^{- \beta(t) \mathbf{I}}. $$
In conclusion, we have proved Theorem \ref{simple version}.

\subsection{How to parameterize symmetric and anti-symmetric matrix} \label{parameterize}
To implement FP-Drift and FP-Noise models practically, we need to find an efficient way to parameterize positive-definite symmetric and anti-symmetric matrix.

Given a full-rank anti-symmetric matrix $B$, there always exist an orthogonal matrix $P$ such that






  






  $$B = P \text{diag}\begin{Bmatrix}\begin{bmatrix}

      0 & \lambda_1 \\

      -\lambda_1 & 0 

  \end{bmatrix}, \cdots \begin{bmatrix}

      0 & \lambda_n \\

      -\lambda_n & 0 

  \end{bmatrix}\end{Bmatrix} P^T,$$
  where $\{\lambda_1,\dots,\lambda_n\}$ are nonzero numbers.
  Then, the inverse of $\mathbf{I} + B$ (appeared in subsection \ref{parameterize}) is:
$$(B + I)^{-1} = P \text{diag}\begin{Bmatrix}\begin{bmatrix}

      \frac{1}{1+\lambda_1^{2}} & \frac{-\lambda_1}{1+\lambda_1^{2}} \\

      \frac{\lambda_1}{1+\lambda_1^{2}} & \frac{1}{1+\lambda_1^{2}}

  \end{bmatrix}, \cdots  \begin{bmatrix}

      \frac{1}{1+\lambda_n^{2}} & \frac{-\lambda_n}{1+\lambda_n^{2}} \\

      \frac{\lambda_n}{1+\lambda_n^{2}} & \frac{1}{1+\lambda_n^{2}}

  \end{bmatrix}\end{Bmatrix} P^T.$$
For positive-definite symmetric matrices, there always exist an orthogonal matrix $P$ such that
$$R = P \text{diag}\{\lambda_1, \dots, \lambda_n\} P^T,$$
where $\{\lambda_1,\dots,\lambda_n\}$ are positive numbers. 

To apply the above method, we only need to parameterize orthogonal matrices in an efficient and expressive way. By treating orthogonal matrices as elements in $SO(n)$ orthogonal group, we utilize the exponential map to parameterize orthogonal matrices $P$:
$$P = \exp{H}.$$
Note that $H$ is an element that belongs to the lie algebra $so(n)$, which can be generated by upper triangular matrices.
\begin{figure}[t]
    \centering
    \includegraphics[width=0.6\textwidth]{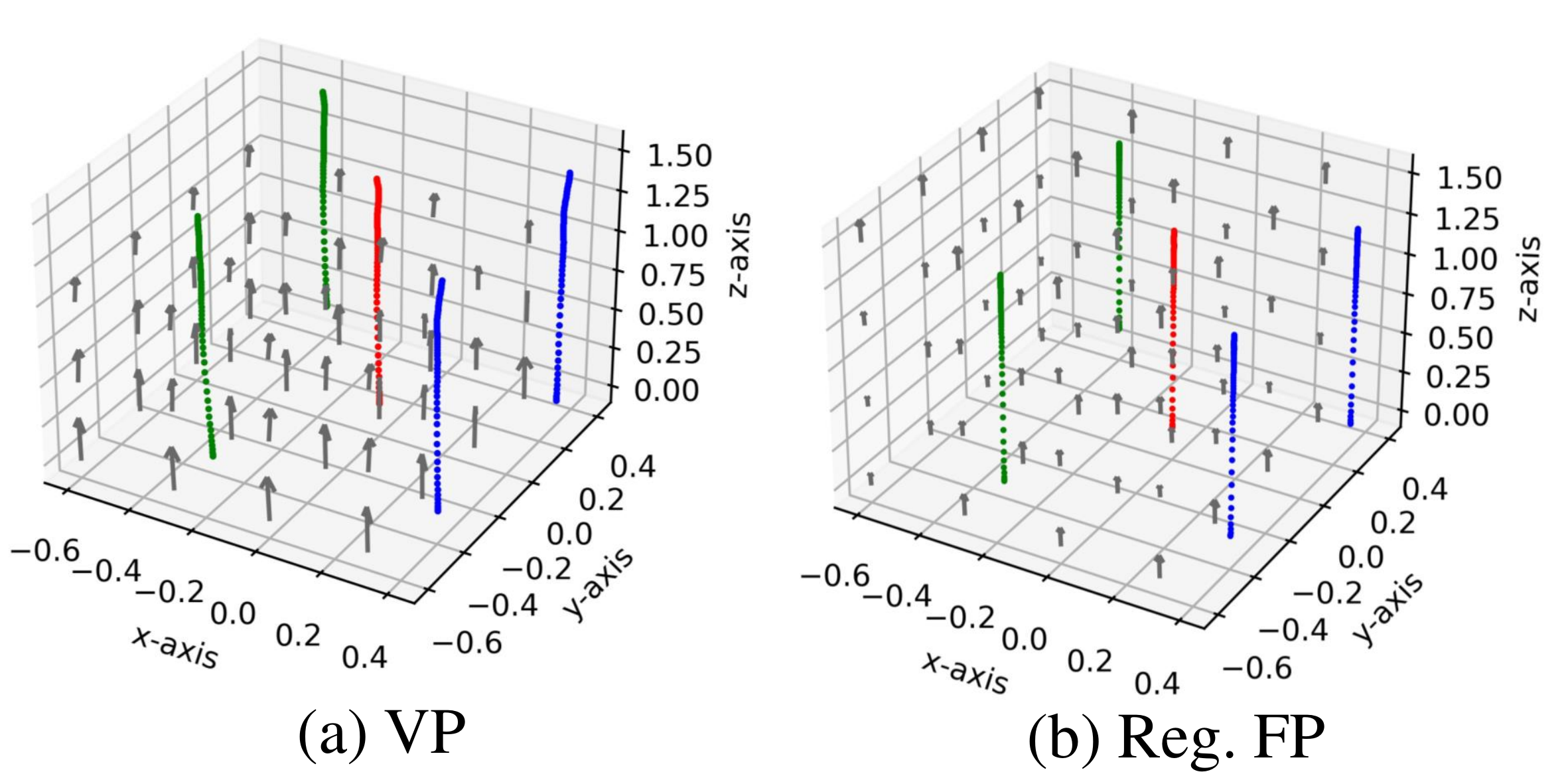}
    \caption{Integral trajectories of two SDEs}
    \label{fig:toy_trajectory}
    \vspace{-0.2cm}
\end{figure}

\section{Experiments}
\subsection{Learned FP SDEs from synthetic 3D examples}
Figure \ref{fig:toy_trajectory} plots four 3D integration trajectories of the probabilistic flows (with respect to the fixed VP and learned FP-Diffusion models) starting at random initial positions. It's obvious that the trajectories of our flexible model are more straight than the fixed VP model, which demonstrates the power of selecting more regular generating paths of our FP-Diffusion model.

\subsection{Image Generation} 
\paragraph{Implementation Details.}
Following \citep{ho2020denoising} and \citep{song2020score}, we rescale the range of the images into $[-1, 1]$ before inputting them into the model. In the FP-Diffusion model, $\beta(t)$ is an linearly increasing function with respect to the time $t$, i.e., $\beta(t)=\Bar{\beta}_{min}+t(\Bar{\beta}_{max}-\Bar{\beta}_{min})$ for $t \in [0, 1]$. It's worth mentioning that DDPM adopts a discretization form of this time scheduler, where $\beta_i = \frac{\Bar{\beta}_{min}}{N} + \frac{i-1}{N(N-1)}(\Bar{\beta}_{max}-\Bar{\beta}_{min})$. These two forms are actually equivalent when $N \rightarrow \infty$. For all experiments, we set $\Bar{\beta}_{max}$ as $20$ and $\Bar{\beta}_{min}$ as $0.1$, which are also used in \citep{ho2020denoising} and \citep{song2020score}. As discussed in \ref{parameterize}, we only need to parameterize the upper triangular matrices $H$ and the diagonal elements $\Lambda = diag\{\lambda_1, \cdots, \lambda_n\}$ in the FP-Drift and FP-Noise models. Particularly, both $H$ and $\Lambda$ are initialized with a multivariate normal distribution, and we adopt a exponential operation on $\Lambda$ to keep it a positive vector.
As described in Section 4.2, we leverage a U-net style neural network to fit the score function of the reverse-time diffusion process. We keep the model architecture and the parameters of the score networks consistent with previous SOTA diffusion models (e.g., \citep{song2020score}) for a fair comparison. All models are trained with the Adam optimizer with a learning rate $2 \times 10^{-4}$ and a batchsize $96$. 

In the MNIST experiment, we first train the whole model for $50$k iterations and train the score model for another $250$k iterations with our Mix training strategy. We report the NLL of the model based on the last checkpoint.
In the CIFAR10 experiment, the training iterations of both stage 1 and stage 2 are $600$k. We also report the FIDs and NLL of the model based on the last checkpoint.

\paragraph{Results.} We provide more random samples from our best FP-Drift model's checkpoint in Fig. \ref{fig:app_cifar}. We also provide the learned forward process of FP-Noise model in Fig. \ref{fig:app_cifar_forward}.

\begin{figure}
    \centering
    \includegraphics{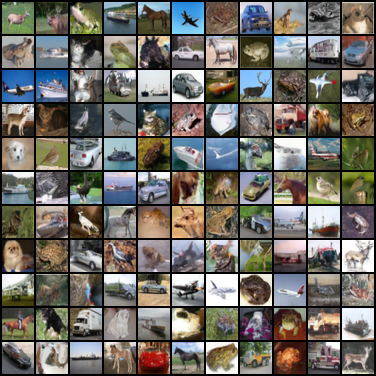}
    \caption{CIFAR-10 samples from FP-Drift}
    \label{fig:app_cifar}
\end{figure}

\begin{figure}
    \centering
    \includegraphics{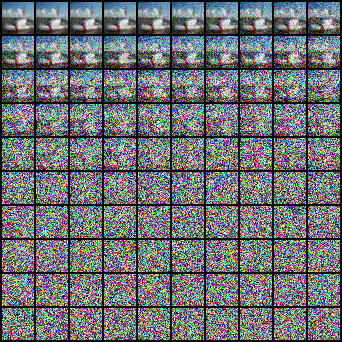}
    \caption{The learned forward process of FP-Noise on CIFAR-10}
    \label{fig:app_cifar_forward}
\end{figure}

\end{document}